\documentclass[11pt]{article}

\usepackage{amsthm}
\usepackage{braket}
\usepackage{diagbox}
\usepackage{float}
\usepackage{enumitem}
\usepackage{tikz-cd}
\usepackage{makecell}
\usepackage{siunitx}
\usepackage{wrapfig}
\usepackage{bbm}

\usepackage[margin=1in]{geometry}
\usepackage[T1]{fontenc}
\usepackage{microtype}

\usepackage{amsmath,amssymb,amsfonts}
\usepackage{newtxtext,newtxmath}
\usepackage{graphicx}
\usepackage{booktabs}
\usepackage{caption}
\usepackage{subcaption}

\usepackage[numbers,sort&compress]{natbib}

\usepackage{xcolor}
\usepackage[hidelinks]{hyperref}
\usepackage[nameinlink,noabbrev]{cleveref}

\usepackage{authblk}

\usepackage{titlesec}
\titlespacing*{\section}{0pt}{2.0ex plus .5ex minus .2ex}{1.0ex}
\titlespacing*{\subsection}{0pt}{1.5ex plus .4ex minus .2ex}{0.8ex}

\usepackage{xspace}
\makeatletter
\providecommand{\bmvaOneDot}{\futurelet\@let@token\@bmvaonedotaux}
\def\@bmvaonedotaux{\ifx\@let@token.\else.\null\fi\xspace}
\providecommand{\bmvaUrl}[1]{\url{#1}}
\providecommand{\runninghead}[2]{}
\providecommand{\bmvcreviewcopy}[1]{}
\makeatother

\newtheorem{definition}{Definition}
\newtheorem{proposition}{Proposition}
\newtheorem{theorem}{Theorem}
\newtheorem*{conjecture}{Conjecture}
\newtheorem{lemma}{Lemma}

\DeclareUnicodeCharacter{1F99C}{}
\DeclareUnicodeCharacter{2217}{*}

\newcommand{\rmA}{\ensuremath{\mathrm{A}}}
\newcommand{\rmB}{\ensuremath{\mathrm{B}}}
\newcommand{\rmE}{\ensuremath{\mathrm{E}}}

\definecolor{codeslate}{rgb}{0.30, 0.35, 0.40}
\definecolor{codenavy}{rgb}{0.15, 0.25, 0.45}

\NewDocumentEnvironment{eqcode}{ m +b }
{%
    \par\vspace{\abovedisplayskip}\noindent
    \refstepcounter{equation}%
    \hfill
    $\displaystyle #2 \qquad \textup{(\theequation)}$%
    \hfill 
    \textcolor{gray}{\scriptsize\texttt{#1}}%
    \hspace{2em}
    \par\vspace{\belowdisplayskip}%
}{}

\title{\vspace{-1.5em}A Unified Framework for Vision Transformers Equivariant to Discrete Subgroups of $\mathrm{O}(2)$}

\author[1]{T\=\i kun \^Ong}
\author[2]{Georg B\"okman}
\affil[1]{Independent Researcher}
\affil[2]{University of Amsterdam}

\date{\vspace{-1.0em}}

\begin{document}

\maketitle

\begin{abstract}
	Vision transformers have become a dominant architecture for visual recognition.
	However, standard models do not explicitly encode the planar symmetries that
	arise in many vision domains. We introduce a family of vision transformers
	equivariant to arbitrary discrete subgroups of $\mathrm{O}(2)$, providing a
	unified framework that generalizes prior flipping- and $D_4$-equivariant
	transformer architectures. Our construction yields equivariant analogues of the
	core transformer components, together with expressivity guarantees for the
	resulting layers. In particular, we show that whenever $H \le G$, the class of
	$G$-equivariant ViTs embeds naturally into the class of $H$-equivariant ViTs. We
	also prove that, in the single-head setting, the corresponding equivariant
	self-attention layer realizes every $G$-equivariant self-attention map
	representable by ordinary self-attention.
	We further construct a $D_6$-equivariant model based on hexagonal patches,
	making the architecture compatible with six-fold rotational symmetries. We
	evaluate the resulting models on the PatternNet aerial image dataset in
	artificially data-scarce regimes across subgroups of $D_4$ and $D_6$. Our
	experiments compare two equivariant attention mechanisms and analyze how the
	choice of homogeneous-space configurations used in the nonlinearities affects
	performance. Preliminary results under matched parameter budgets indicate that
	equivariance can improve recognition accuracy, motivating further study of how
	discrete symmetry groups shape transformer-based visual recognition models.
\end{abstract}


\section{Introduction}
\textit{Geometric deep learning} is concerned with designing model
architectures that systematically incorporate symmetry and geometry of a
learning task as inductive bias
\cite{cohenGeneralTheoryEquivariant2020,bronsteinGeometricDeepLearning2021,gerkenGeometricDeepLearning2023}.
An important class of such models are the so-called \textit{group-equivariant}
neural networks
\cite{shawe-taylorBuildingSymmetriesFeedforward1989, cohenGroupEquivariantConvolutional2016},
which enjoy layer-wise group equivariance. That is, the map represented by each
layer, mapping input features to output features, commutes with a priori specified group actions.
Using a group-equivariant neural network exploits the existence of a group action on the data, for instance translations and rotations of images, and aim to simplify the learning task by hard-coding this symmetry into the neural network architecture. 
An important special case of equivariance is
\textit{invariance}, where the group action on the output of the network is
trivial.
Image classification of aerial imagery is a prototypical invariant task, which we will consider in the experiments.

So far, apart from graph neural networks with equivariant features
\cite{andersonCormorantCovariantMolecular2019,maronInvariantEquivariantGraph2019,satorrasEquivariantGraphNeural2022,eijkelboomEquivariantMessagePassing2023,batatiaMACEHigherOrder2023, batznerE3equivariantGraphNeural2022},
which have been popular in particular due to their applications in chemistry \cite{zhangArtificialIntelligenceScience2025},
the most prominent examples of equivariant or invariant neural networks
are generalizations of convolutional neural networks (CNNs)
\cite{cohenGroupEquivariantConvolutional2016,cohenSteerableCNNs2016,weiler3DSteerableCNNs2018,
	kondorGeneralizationEquivarianceConvolution2018,kondorClebschGordanNetsFully2018,veelingRotationEquivariantCNNs2018,hoogeboomHexaConv2018,
	weilerGeneral$E2$EquivariantSteerable2021,bekkersRotoTranslationCovariantConvolutional2018,heEfficientEquivariantNetwork2021,
	kunduGeometricApproachSteerable2025}.
These often involve convolutions over groups, with the usual convolution being the special case for
the group $\mathbb{R}^n$ or $\mathbb{Z}^n$, equivariant to translations.
CNNs equivariant to discrete subgroups of the roto-reflection group $\mathrm{O}(2)$ have been widely studied and often outperform ordinary CNNs in the equal  parameter setting \cite{weilerGeneral$E2$EquivariantSteerable2021}.

Recently, CNNs have been replaced by vision transformers (ViTs) in many
state-of-the-art computer vision models \cite{dosovitskiyImageWorth16x162021,
oquabDINOv2LearningRobust2024, Wang_2025_CVPR}. There are multiple reasons for
preferring ViTs, including ease of capturing long-range relationships between
different parts of an image (or multiple images) and architectural alignment
with networks used for other data modalities, such as large language models.
Given the success of equivariant CNNs, it is natural to consider
equivariant ViTs, which are the main objects of study in this paper.

We take a representation-theoretic view of equivariant vision transformers for
discrete subgroups $G \le \mathrm{O}(2)$. This viewpoint subsumes the flipping-
and $D_4$-equivariant ViTs of Refs.~\cite{bokmanFloppingFLOPsLeveraging2025,
nordstromOcticVisionTransformers2025}, while making it possible to reason
about the resulting model classes independently of any particular symmetry
group. An important part of our analysis is to compare these classes as the
symmetry group varies. 
We show, for instance, that imposing a larger symmetry group does not lead to an
unrelated architecture: a $G$-equivariant ViT can be regarded naturally as an
$H$-equivariant ViT for every subgroup $H \le G$. We also analyze the
self-attention layer itself and prove that, at least for a single attention
head, the equivariant parameterization loses no expressive power relative to
standard self-attention once one restricts to maps that are $G$-equivariant. At
the same time, this formalism brings several architectural choices into focus,
including nonlinearities constructed from arbitrary $G$-sets and several possibilities for
equivariant self-attention mechanisms. 
The experiments in Section~\ref{sec:experiments} are designed to probe these choices in
controlled, small-scale settings, rather than to optimize for large-scale
benchmark performance.

%


By separating the representation-theoretic structure from group-specific implementation choices, the framework provides a common basis for constructing, comparing, and analyzing equivariant ViTs across different planar symmetry groups.
\section{Related Works}

Our work is a generalization of equivariant transformers presented in
Refs.~\cite{bokmanFloppingFLOPsLeveraging2025,nordstromOcticVisionTransformers2025},
rendering these architectures as special cases for $G=D_1$ (mirror symmetry) and $G=D_4$.
These architectures in turn closely follow the original Vision Transformer
\cite{dosovitskiyImageWorth16x162021}, which can be seen as the $G=\{e\}$
(trivial group) case.

Other types of equivariant vision transformers have been considered in the literature.
Most notably, Ref.~\cite{xu$E2$EquivariantVisionTransformer2023} uses a
``lifting self-attention'' layer in the very beginning to lift token features
to functions on the group $G$ (i.e., spatial domain features).
In Ref.\ \cite{islamPlatonicTransformersSolid2025}, discrete subgroups $G$ of
$\mathrm{O}(2)$ (as well as $\mathrm{O}(3)$) are considered, where spatial domain features are used
in conjunction with group convolutions for equivariant linear layers.
There are also several works on equivariant transformers for point cloud data. 
In
Ref.~\cite{assaadVNTransformerRotationEquivariantAttention2023}, where an
$\mathrm{SO}(3)$-equivariant attention mechanism for 3D point clouds is
presented, the token feature vectors transform in the defining
(fundamental/three-dimensional) representation of $\mathrm{SO}(3)$.
In Refs.~\cite{fuchsSE3Transformers3DRotoTranslation2020,chatzipantazisSE3EquivariantAttentionNetworks2023},
higher-order $\mathrm{SO}(3)$-tensors (higher-dimensional irreps)
are also used.

We would also like to note that there is another line of work on equivariant
architectures which aims to achieve equivariance by having the model learn to
rotate an input image to its ``canonical'' orientation
\cite{taiEquivariantTransformerNetworks2019,jaderbergSpatialTransformerNetworks2016, kabaEquivarianceLearnedCanonicalization2023}.
These models are frequently called (spatial) transformers in the literature, but their approach is completely distinct to what is commonly referred to as a Transformer following the landmark \textit{Attention Is All You Need} paper, Ref.~\cite{vaswaniAttentionAllYou2017}.
Our vision transformers are transformers in the sense of Ref.~\cite{vaswaniAttentionAllYou2017}.

\section{Group theory preliminaries}
\label{sec:group_theory}

We will assume some familiarity with group theory, a good reference is Serre's textbook on representation theory, Ref. \cite{serreLinearRepresentations}. Let $G$ be a finite
group acting on two sets $X,Y$. A map $\phi: X\rightarrow Y$ is said to be
\textit{$G$-equivariant} if $\phi$ commutes with the $G$-action. The main goal
of Section~\ref{sec:method} is to construct Vision Transformer layers that are
$G$-equivariant, where $G$ is a discrete subgroup of $\mathrm{O}(2)$.

An important form of group actions are group representations, which are linear group actions on vector spaces.
Since representation theory will play a central role in our construction of
equivariant layers, we briefly recall some basic definitions and mathematical
results here. If not otherwise specified, all vector spaces are over
$\mathbb{R}$. 

\begin{definition}
	Let $G$ be a finite group. 
    A \textbf{representation} of $G$ on a real vector space $V$ 
    is a group homomorphism $\rho: G\rightarrow \mathrm{GL}(V)$.
    \begin{enumerate}[label=(\roman*)]
    \item  A representation $(\rho, V)$ is \textbf{irreducible}
    if there exists no nontrivial subspace $U\subset V$ that is
    invariant under the $G$-action.
    \item A representation $(\rho, V)$ on an inner product space $V$
    is \textbf{orthogonal} if $\rho(G) \subset \mathrm{O}(V)$.
    \end{enumerate}
\end{definition}
By standard abuse of notation, we will sometimes write $V$ or $\rho$
instead of $(\rho, V)$ to refer to a group representation.
We often use the shorthand \textit{irrep} to refer to irreducible
representations. 
Every group has a one-dimensioanl irrep, called the \textit{trivial representation} and denoted by $\rho_{\text{triv}}$, by sending all elements to the identity $1\times 1$ matrix.
Two representations are said to be \textit{isomorphic} if there exists a $G$-equivariant
linear bijection between them.
By Maschke's theorem, any representation $(\rho, V)$ of
$G$ is isomorphic to a direct sum of irreps. In practice, the vector space $V$
will always come with a natural inner product, and we always take
representations to be orthogonal, which facilitates the construction of
equivariant self-attention (see Section~\ref{sec:method}).

For two $G$-representations $(\rho, V), (\sigma, W)$, we denote by
$\mathrm{Hom}_G(V, W)$ the vector space of $G$-equivariant linear maps $V\rightarrow
	W$.
We will also write $\mathrm{End}_G(V) := \mathrm{Hom}_G(V,V)$, which is an algebra over $\mathbb{R}$.
If $V$ is any real irrep, by Schur's lemma, $\mathrm{End}_G(V)$ is a division
algebra (all nonzero elements are invertible), so $\mathrm{End}_G(V)\cong
	\mathbb{R}, \mathbb{C}, $ or $\mathbb{H}$ by Frobenius' theorem on division algebras,
where $\mathbb{H}$ is the algebra of quaternions. The real irrep $V$ is
then of \textit{real type, complex type,} and \textit{quaternionic type}
respectively. In this paper, as we consider discrete subgroups $G$ of $\mathrm{O}(2)$, real irreps are either
of real type or of complex type and either one- or two-dimensional.
We provide an overview of the irreps of $G$ in the Supplementary Material.

For our construction of nonlinearities, we will need the following notion:
\begin{definition}
	A \textbf{homogeneous space} of a group $G$ is a set
	$X$ with a transitive $G$-action.
\end{definition}

Here, transitive means that each element $x\in X$ can be taken to any other
element $y\in X$ by the $G$-action. For any $x\in X$, we denote by
$\mathrm{Stab}_G(x):= \{g\in G \mid gx = x\}$ the stabilizer subgroup. It is
then straightforward to show that $X\cong G/\mathrm{Stab}_G(x)$ as
$G$-sets (i.e., there is a $G$-equivariant bijection). Conversely, for
any subgroup $H\le G$, the coset space $G/H$ is naturally a homogeneous space.
For any two subgroups $H,H'\le G$, the homogeneous spaces $G/H$ and
$G/H'$ are isomorphic as $G$-spaces if and only if $H$ and $H'$ are conjugate to each other.
Thus, $G/H$, with one subgroup $H$ from each subgroup conjugacy class,
exhaust all possible homogeneous spaces of $G$ up to isomorphism.

Finally, we would like to fix the notation for a construction that is ubiquitous
in our work and in geometric deep learning in general. For a set $X$ and a
vector space $V$, we denote
by $C(X, V)$ the vector space of all maps $X\rightarrow V$. If there is a
$G$-action on $X$, the vector space $C(X, V)$ is in addition a
$G$-representation, with a group element $g\in G$ acting on a function $f:
X\rightarrow V$ by $(g\cdot f)(x) = f(g^{-1}x)$. 
Clearly, $C(X, V) \cong \mathbb{R}^X\otimes V$ canonically.
If $V$ also carries a $G$-representation $\rho$, then a natural
$G$-representation on $C(X,V)$ is given by $g\cdot f(x) = \rho(g) f(g^{-1}x)$.

\section{Method}
\label{sec:method}
We start by setting up the underlying geometric structure on which the equivariant ViT will operate.
Recall that the \textit{Minkowski sum} of two subsets $X,Y$ of a vector space is 
defined as $X+Y := \{x+y | x\in X, y\in Y\}$.
\begin{definition}
\label{defn:patches}
Let $G$ be a discrete subgroup of $\mathrm{O}(2)$.
A $G$-\textbf{patchified grid} is a set $\mathcal{H}_0\subset \mathbb{R}^2$
that can be written as the Minkowski sum of two $G$-stable finite subsets $U, \mathcal{H}\subset \mathbb{R}^2$.
$U$ is called the \textbf{base patch}, and its translates $U_a:= U + a$, where $a\in \mathcal{H}$, are the \textbf{patches} of $\mathcal{H}_0$.
\end{definition}
Note that we do \textit{not} require the patches $U_a$ to be disjoint. Since $U$ and $\mathcal{H}$ are stable under $G$, so is $\mathcal{H}_0$, implying that all three subsets of $\mathbb{R}^2$ are $G$-sets.

For example, for $G=D_4$ acting on $\mathbb{R}^2$ by reflections and $\ang{90}$ rotations, we can take 
\begin{equation}
\label{eq:d4_patches}
\begin{aligned}
    &U = \left\{-\frac{P-1}{2}, -\frac{P-3}{2}, \dotsb, \frac{P-1}{2}\right\}^2, \quad
    &\mathcal{H} = \left\{-\frac{q-1}{2}P, -\frac{q-3}{2}P, \dotsb, \frac{q-1}{2}P\right\}^2.
\end{aligned}
\end{equation}
Then $\mathcal{H}_0 = U + \mathcal{H}$ is a usual square grid with $(qP)^2$
pixels and $q^2$ disjoint patches, with each patch having $P^2$ pixels.

An RGB image is then an element of $C(\mathcal{H}_0, \mathbb{R}^3)$, and each
image patch is an element of $C(U_a, \mathbb{R}^3)\cong C(U, \mathbb{R}^3)$.
Before discussing the details of each equivariant layer, we would like to
clarify the space in which a single token in an intermediate layer of our model
lives. In order for equivariance to make sense at all, a token feature $x$ must
be an element of a space on which $G$ acts. A simple and natural assumption is that $x$
belongs to a finite-dimensional $G$-representation $V$ over $\mathbb{R}$. By
Maschke's theorem, we can take
\begin{equation}
	\label{eq:feature_space}
	V = \bigoplus_{\rho \in \widehat{G}} \mathbb{R}^{C_\rho}\otimes  V_\rho,
\end{equation}
where $\widehat{G}$ denotes the set of (equivalence classes of) irreps of $G$, 
    $V_\rho$ is the irrep space of $\rho$, and $C_\rho$ is the multiplicity of
the irrep.
A $G$-equivariant ViT (without class tokens) of depth $\delta$ is the
composition 
\begin{equation}
\label{eq:equivit-composition}
\begin{aligned}
    C(\mathcal{H}_0, \mathbb{R}^3) 
    \xrightarrow{\substack{\text{patch embed}\\ \text{\&pos. encoding}}} C(\mathcal{H}, V)
    \xrightarrow{\text{Block}_1} C(\mathcal{H}, V)
    \xrightarrow{\text{Block}_2}
    \dotsb
    \xrightarrow{\text{Block}_\delta}
    C(\mathcal{H}, V),
    \end{aligned}
\end{equation}
where each map is $G$-equivariant. Here, $\mathrm{Block}_i$ is the $i$-th
transformer block, which consists of a multi-head self attention layer followed
by a multilayer perceptron (MLP), both with residual connections. In the
rest of this section, we will elaborate the construction of each layer in
Eq.~\eqref{eq:equivit-composition}.

In practice, elements of the vector space $V$ are stored as a tuple of tensors
of shape $(C_\rho, d_\rho)$, where $d_\rho$ is the dimension of the irrep
$\rho$. For example, the dihedral group $D_6$ of order $12$ has $6$ irreps, labeled by
$\rmA_1, \rmA_2, \rmB_1, \rmB_2$ (one-dimensional), and $\rmE_1, \rmE_2$ (two-dimensional).
A token feature is then represented by $x=(x^{\rmA_1}, x^{\rmA_2}, x^{\rmB_1}, x^{\rmB_2}, x^{\rmE_1}, x^{\rmE_2})$,
where $x^{\rm{A1}}$ has shape $(C_{\rm{A1}}, 1)$, $x^{\rm{E_1}}$ has shape
$(C_{\rm{E_1}}, 2)$, etc.
A visualization of feature maps of a $D_6$-equivariant ViT is shown in Figure~\ref{fig:d6_feature_maps}.

In principle, we allow arbitrary choices of irrep multiplicities $C_\rho$.
A common choice involves $C$ copies of the regular representation,
where $C_\rho = Cd_\rho$ for $\rho$ of real type and $C_\rho = Cd_\rho/2$ for $\rho$ of complex type.

\begin{figure}[H]
\centering
\includegraphics[width=.92\textwidth]{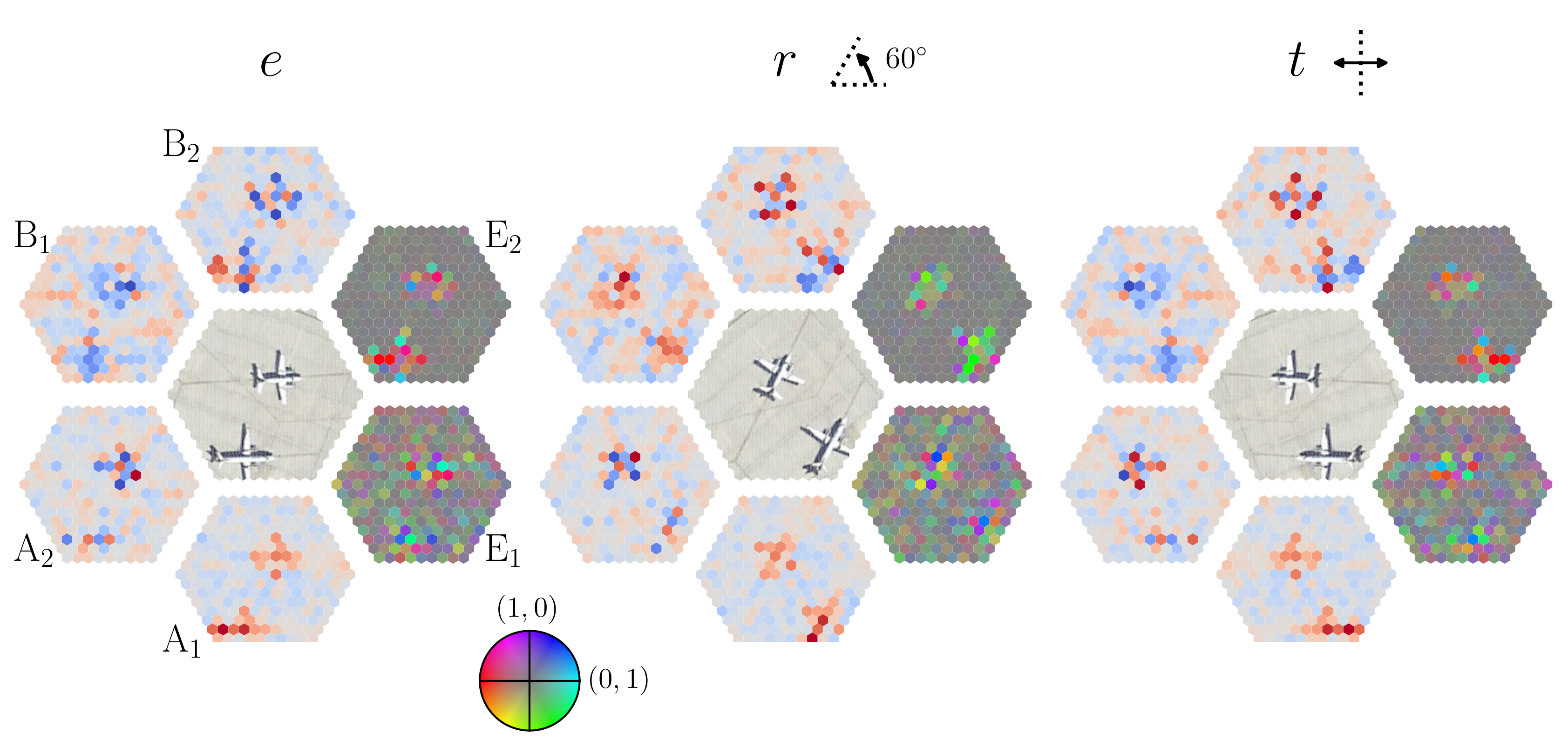}
\caption{Feature maps of a $D_6$-equivariant vision transformer after four transformer blocks in a trained classification model.
For each irrep $\rho \in \widehat{D_6}$, we select a single channel from 
the irrep component $x^\rho \in \mathbb{R}^{\mathcal{H}} \otimes \mathbb{R}^{C_\rho}\otimes V_\rho$.
The features in the two-dimensional irreps $\rmE_1$ and $\rmE_2$ are 
represented by encoding the polar angle and length of a vector in $\mathbb{R}^2$ using the hue and a combination of saturation and brightness respectively (see the color wheel for reference).
For the one-dimensional irreps ($\rmA_1, \rmA_2, \rmB_1, \rmB_2$), red and blue
indicate positive and negative values respectively, with gray representing zero.
}
\label{fig:d6_feature_maps}
\end{figure}

\subsection{Linear Layer}
\label{sec:linear}
We first present a generic equivariant linear layer, which is used in 
our patch embedding layer as well as 
in the MLP and self-attention layers in each transformer block.

Suppose $V' = \bigoplus_{\rho\in\widehat{G}} \mathbb{R}^{C_\rho'}\otimes V_\rho$
and $V'' = \bigoplus_{\rho\in\widehat{G}} \mathbb{R}^{C_\rho''}\otimes V_\rho$
are two $G$-representations decomposed into irreps. Characterizing
$\mathrm{Hom}_G(V', V'')$, the space of $G$-equivariant linear maps
$V'\rightarrow V''$, is straightforward by Schur's lemma:
\begin{equation}
	\mathrm{Hom}_G(V', V'') =
    \bigoplus_{\rho\in\widehat{G}}
    \mathrm{Hom}_G(\mathbb{R}^{C_\rho'}\otimes V_\rho,
    \mathbb{R}^{C_\rho''}\otimes V_\rho)
    =
    \bigoplus_{\rho\in\widehat{G}}
	\mathrm{Mat}_{C_\rho''\times C_\rho'}(\mathbb{R})\otimes 
	\mathrm{End}_G(V_\rho).
\end{equation}
Here, $\mathrm{Mat}_{m\times n}(\mathbb{R})$ denotes the set of $m\times n$ matrices with entries in $\mathbb{R}$.
For practical implementations, this means that a general $G$-equivariant linear
map $W\in \mathrm{Hom}_G(V', V'')$ is an irrep-wise linear map, which we denote by
$W=(W^{(\rho)})_{\rho\in \widehat{G}}$, with
$
	W^{(\rho)} = \sum_{i=1}^{\dim \mathrm{End}_G(V)}w_i \otimes L_i,
$
where $(L_i)_i$ is a chosen basis for $\mathrm{End}_G(V_\rho)$ and $w_i$ contains $C_\rho''\times C_\rho'$ learnable parameters.

If the irrep $V_\rho$ is of real type, then (by definition) $\mathrm{End}_G(V_\rho) \cong
\mathbb{R}$, so a linear map is equivariant if and only if it only acts on the
channel dimension in $\mathbb{R}^{C_\rho'}\otimes V_\rho$.
To implement $W^{(\rho)}$ for a complex-type irrep $V_\rho$, 
which is always two-dimensional in our case (see Section~A of the Supplementary Material),
a convenient choice is 
$L_1=\left(\begin{smallmatrix}1 & 0 \\ 0 & 1\end{smallmatrix}\right)$ and $L_2=\left(\begin{smallmatrix}0 & -1 \\ 1 & 0\end{smallmatrix}\right)$, spanning a subalgebra of $2\times 2$ matrices isomorphic to $\mathbb{C}$.

Finally, we allow the possibility of adding a bias $b\in \mathbb{R}^{C_{\rho_{\text{triv}}}''} \otimes V_{\rho_{\text{triv}}}$ only for the trivial representation.
That is, the complete linear layer is given by
\begin{equation}
	x^{\rho_{\rm{triv}}} \oplus \left(
	\bigoplus_{\rho \in \widehat{G}\setminus \{\rho_{\rm{triv}}\}}
	x^\rho \right)
	\mapsto
	(W^{(\rho_{\rm{triv}})}x^{\rho_{\rm{triv}}} + b) \oplus \left(\bigoplus_{\rho \in \widehat{G}\setminus \{\rho_{\rm{triv}}\}}
	W^{(\rho)}x^\rho\right).
\end{equation}
This exhausts the space of all $G$-equivariant affine maps $V'\rightarrow V''$.
In general, we denote this space by $\mathrm{Aff}_G(V', V'')$.

\subsection{Patch Embedding and Positional Encoding}
\label{sec:pe_pos_enc}
Given a discrete subgroup $G\le \mathrm{O}(2)$
and a $G$-patchified grid $\mathcal{H}_0 = U + \mathcal{H}$
(see Definition~\ref{defn:patches}), 
the patch embedding layer together with added positional encodings
is a $G$-equivariant affine map $C(\mathcal{H}_0, \mathbb{R}^3)\rightarrow C(\mathcal{H}, V)$.
We describe their construction in the following.

\subsubsection{Patch embedding}
For each $a\in \mathcal{H}$, a patch $x(a)$ of an input color image $\mathcal{I}\in C(\mathcal{H}_0, \mathbb{R}^3)\cong \mathbb{R}^{\mathcal{H}_0}\otimes \mathbb{R}^3$ is nothing but the restriction of $\mathcal{I}$ to $U_a$. That is, $x(a) = \mathcal{I}|_{U_a} \in C(U_a, \mathbb{R}^3)$.
Because the $U_a$ are translates of $U$, we can naturally identify $U_a$ and $U$.
The patchification (unfold) map is given by
\begin{equation}
\begin{aligned}
\mathcal{P}: \;C(\mathcal{H}_0, \mathbb{R}^3)
&\rightarrow C(\mathcal{H}, C(U, \mathbb{R}^3))\cong \mathbb{R}^{\mathcal{H}}\otimes \mathbb{R}^U \otimes \mathbb{R}^3, \quad
& \mathcal{I} \mapsto (a\mapsto x(a)).
\end{aligned}
\end{equation}
Note that this map is $G$-equivariant with respect to the natural $G$-action on
both sides.

An equivariant patch embedding layer is then defined as the composition of $\mathcal{P}$
together with an equivariant linear map $l\in \mathrm{Hom}_G(C(U, \mathbb{R}^3), V)$:
\begin{equation}
C(\mathcal{H}_0, \mathbb{R}^3)
\xrightarrow{\mathcal{P}}
C(\mathcal{H}, C(U, \mathbb{R}^3))
\xrightarrow{\mathrm{Id}_{\mathcal{H}}\otimes l}
C(\mathcal{H}, V)
\end{equation}

In practice, one fixes an orthonormal basis $e_1, \dotsb, e_{d_\rho}$ for each irrep $V_\rho$, and
precomputes an orthonormal basis $(E_{\alpha j}^{\rho})_{\rho\in \widehat{G},\alpha\in [\nu_\rho], j\in [d_\rho]}$ for $C(U, \mathbb{R})$, where $\nu_\rho$ is the multiplicity of $\rho$ in $C(U, \mathbb{R})$, such that for each $\rho$ and $\alpha \in [\nu_\rho]$, the linear map
defined by $E^\rho_{\alpha j}\mapsto e_j$ is $G$-equivariant from $\mathrm{span}\{E^\rho_{\alpha 1}, \dotsb, E^\rho_{\alpha d_\rho}\}$ onto $V_\rho$.
Then, the $\rho$-th component of the patch embedding layer is given by
$
y^{\rho}(a)_c = \sum_{c', k,l, \alpha, \mu} K_{\alpha cc'; \mu}^\rho
(L_\mu)_{kl}
\braket{E^{\rho}_{\alpha l}, x(a)_{c'}} e_k,
$
where $K^\rho\in \mathbb{R}^{\nu_\rho\times C_\rho \times 3\times\dim\mathrm{End}_G(V_\rho)}$ is a learnable tensor.
We can intepret $F^\rho_{cc'k}:= \sum_{\alpha,\mu} K^\rho_{\alpha cc';\mu} (L_\mu)_{kl}E^\rho_{\alpha l}\in C(U, \mathbb{R})$ as a filter for the $j$-th component of the irrep $\rho$ for output channel $c$ and input color channel $c'$.
Fig.~\ref{fig:d6-filters} illustrates the patch embedding layer for $G=D_6$, where
both $U$ and $\mathcal{H}$ are taken to be regular hexagons.

\begin{figure}
    \centering
    \includegraphics[width=.9\textwidth]{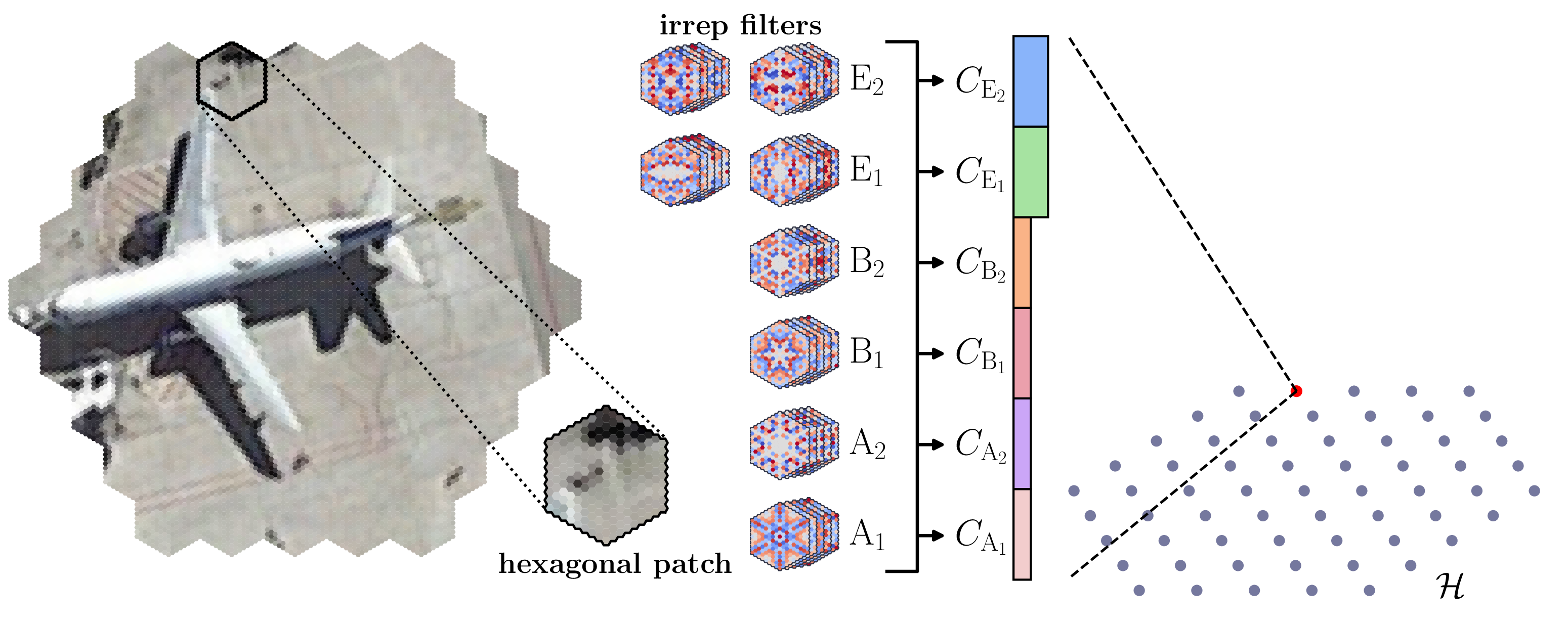}
    \caption{Illustration of the equivariant patch embedding layer with $G=D_6$.}
    \label{fig:d6-filters}
\end{figure}

\subsubsection{Positional encoding}
In this work, we employ learnable absolute positional encodings. That is, a
position-dependent learnable element of $V$ is added to the token features
after the patch embedding layer:
\begin{equation}
    \mathrm{PosEnc}: C(\mathcal{H}, V) \ni x 
    \mapsto x + p \in C(\mathcal{H}, V),
\end{equation}
where $p \in C(\mathcal{H}, V) = \mathbb{R}^{\mathcal{H}}\otimes \left(\bigoplus_{\rho \in \widehat{G}}\mathbb{R}^{C_\rho}\otimes V_\rho\right)$ denotes the positional encodings.
As noted in Ref.~\cite{nordstromOcticVisionTransformers2025}, 
the map $\mathrm{PosEnc}$ is equivariant if and only if $p$ is invariant under the $G$-action.
That is, we require
$
\rho(g) \left[p^{\rho}(g^{-1}\cdot a)\right]
= p^\rho(a).
$
for all irreps $\rho \in \widehat{G}$.
In practice, we precompute a basis for the $G$-invariant subspace 
of $\mathbb{R}^{\mathcal{H}}\otimes V_\rho$, and linearly combine them
using learnable weights during training.

\subsection{Nonlinearity}
\label{sec:nonlin}
While nonlinearity is straightforward to implement if the features
are represented in the ``spatial domain'' of the group,
it is significantly more complex in our case, as our features are
numerically represented as tuples of irrep components.
In this section, we describe a type of nonlinearity that first performs
a Fourier transform (more precisely, a generalization thereof) of
the input features, applies a pointwise nonlinearity, and then transforms back.
As will become clear, not only does this procedure generalize the constructions in
Refs.~\cite{bokmanFloppingFLOPsLeveraging2025,nordstromOcticVisionTransformers2025}
to any finite group, it also allows strictly more freedom in
constructing nonlinearities. In the second part of this section, we argue that 
this is the most general class of equivariant nonlinearities for MLPs under certain natural assumptions.

\subsubsection{The construction}
\label{sec:nonlin-construction}


Fix a (finite) $G$-set $X$. The set $\tilde V:= C(X, \mathbb{R})$ of real-valued
functions is naturally a $G$-representation. If $\sigma: \mathbb{R}\rightarrow \mathbb{R}$ is any function, then the entrywise application of $\sigma$, i.e.,
$C(X, \mathbb{R})\ni (y_m)_{m\in X} \mapsto (\sigma(y_m))_{m\in X}$, is $G$-equivariant.
By Maschke's theorem, the representation $\tilde V$ is isomorphic to a direct
sum of copies of irreps of $G$. Let $\mathrm{FT}$ denote such an isomorphism:
\begin{equation}
	\label{eq:fourier}
	\begin{aligned}
		\mathrm{FT}: \tilde V \xrightarrow{\sim}
		\bigoplus_{\rho\in\widehat{G}} \mathbb{R}^{C'_\rho}\otimes V_\rho
		=: V'
	\end{aligned}
\end{equation}
Here, $\mathrm{FT}$ stands for Fourier transform. The $C'_\rho$ are the
multiplicities of the irreps appearing in $\tilde V$. 
The composition
\begin{equation}
	\label{eq:nonlin-ft}
	V' \xrightarrow{\mathrm{FT}^{-1}} \tilde V \xrightarrow{\text{entrywise\;} \sigma} \tilde V\xrightarrow{\mathrm{FT}}  V'
\end{equation}
is then equivariant and not linear (if $\sigma$ is not linear). 

\begin{wrapfigure}[16]{right}{0.5\textwidth}
\centering
\includegraphics[width=.5\textwidth]{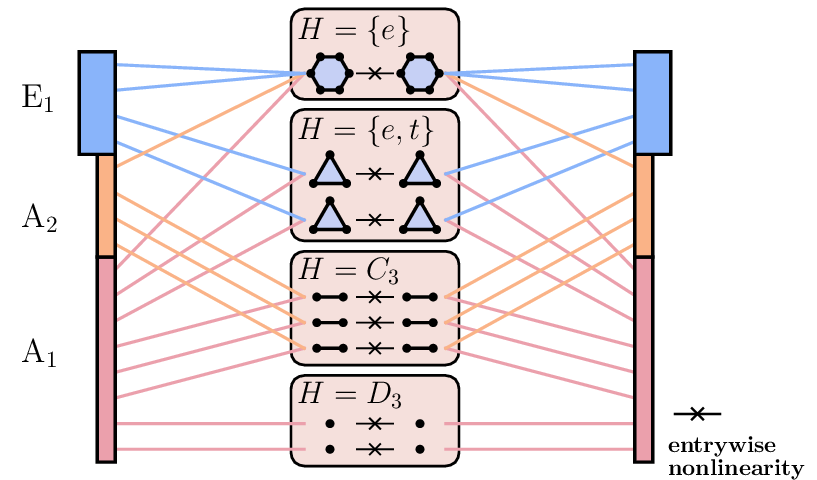}
\caption{Illustration of our equivariant nonlinearity for $G=D_3$.
In this case, $C_{\rmA_1}=8, C_{\rmA_2}=4, C_{\rmE_1} = 4,$
$n_{\{e\}} = 1, n_{\{e,t\}}=2, n_{C_3} = 3,$ and $n_{D_3}=2$.
}
\label{fig:d3-nonlin}
\end{wrapfigure}

For fixed $\sigma$ and up to $G$-equivariant linear bijections $V'\rightarrow
V'$, the map represented by Eq.~\eqref{eq:nonlin-ft} depends only on the orbit
structure of $X$. 
That is, we can decompose $X$ into
a disjoint union of copies of homogeneous $G$-spaces, 
\begin{equation}
\label{eq:homog_spaces}
X=\bigsqcup_{\alpha \in \mathrm{Sub(G)/\sim}}\bigsqcup_{s=1}^{n_\alpha}X_\alpha,
\end{equation}
and the $G$-equivariant MLP constructed using $X$ depends only on the integers
$(n_\alpha)_{\alpha \in \mathrm{Sub}(G)/\sim}$. Here, $\mathrm{Sub}(G)/\sim$ is
the set of equivalence classes of subgroups of $G$ with respect to conjugation,
and $X_\alpha$ is the homogeneous space obtained by taking the quotient 
by any subgroup $H\in \alpha$ in the equivalence class.
Hence, we can think of the $X_\alpha$ as the ``elemenatry lego blocks'' for
constructing the equivariant nonlinear layer. The irrep multiplicities in Eq.~\eqref{eq:fourier} can be related to the $n_\alpha$ by
$C_\rho'= \sum_\alpha \Gamma^\alpha_\rho n_\alpha$, where $\Gamma^\alpha_\rho$
is the multiplicity of irrep $\rho$ in $C(X_\alpha, \mathbb{R})$.

For example, take $G$ to be the dihedral group $D_3$ of order $6$.
It has, in total, $4$ subgroups up to conjugacy. These are given by
\begin{equation}
	\begin{aligned}
		  \textrm{cyclic:}\;\{e\}, \{e,r,r^2\}=C_3
		  \hspace{2em} \textrm{dihedral:}\;\{e,t\}, D_3
	\end{aligned}
\end{equation}
The homogeneous space $D_3 / \{e\}$ is simply the regular group
action (this is true for any group), which decomposes as $\rmA_1 \oplus \rmA_2 \oplus 2\cdot \rmE_1$.
The homogeneous space $D_3/\braket{t}$ is the action on the
three vertices of the triangle. 
Thus, $C(D_3/\braket{t}, \mathbb{R})$ is
three-dimensional, and it is an easy exercise to verify that this representation decomposes as
$\rmA_1\oplus \rmE_1$.
If we take one copy of $D_4/\{e\}$ and two copies of $D_4/\braket{tr}$
in Eq.~\eqref{eq:homog_spaces}, we get in the right hand side of Eq.~\eqref{eq:fourier}
\begin{equation}
	V' =
	(\mathbb{R}^2 \otimes V_{\rmA_1})
	\oplus
	(\mathbb{R}^1 \otimes V_{\rmA_2})
	\oplus
	(\mathbb{R}^3 \otimes V_{\rmE_1}),
\end{equation}
and any nonlinear function $\sigma$ gives rise to an equivariant nonlinearity
$V'\rightarrow V'$ via Eq.~\eqref{eq:nonlin-ft}. See Fig.~\ref{fig:d3-nonlin} 
for an illustration for a more general choice of homogeneous spaces for $D_3$.


\subsubsection{How general is this nonlinearity?}
We will now argue that the construction presented in Section~\ref{sec:nonlin} is
the most general type of nonlinearity for an MLP that is $G$-equivariant, given
some natural assumptions on the nature of the nonlinearity.

We assume that an equivariant MLP layer takes the form $l_2 \circ f \circ l_1$,
where $l_1: V\rightarrow \mathbb{R}^n$ and $l_2: \mathbb{R}^n \rightarrow V$ are
$G$-equivariant affine maps, $\mathbb{R}^n$ carries a $G$-representation, and
$f:\mathbb{R}^n\rightarrow \mathbb{R}^n$ is the entrywise application of any
activation function $\sigma: \mathbb{R}\rightarrow \mathbb{R}$.
The following lemma then implies that this class of MLPs 
coincides with the class of functions representable 
by MLPs with $G$-equivariant affine maps together with nonlinearity constructed according to Eq.~\eqref{eq:nonlin-ft}:
\begin{lemma}
\label{lem:nonlin-permute}
Suppose a matrix representation $\rho: G\rightarrow \mathrm{GL}(n)$
commutes with entrywise application of any function $\sigma:\mathbb{R}\rightarrow \mathbb{R}$
on $\mathbb{R}^n$, then every $\rho(g)$ is a permutation matrix.
\end{lemma}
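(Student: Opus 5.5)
Fix $g\in G$ and write $A=\rho(g)$. The hypothesis gives $A\,\sigma(x)=\sigma(Ax)$ for every $x\in\mathbb{R}^n$ and every $\sigma:\mathbb{R}\to\mathbb{R}$, where $\sigma$ is applied entrywise. The plan is to test this identity on well-chosen pairs $(x,\sigma)$ to pin down the entries of $A$ one column at a time, and then use invertibility of $A$ to conclude that $A$ is a permutation matrix.

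\textbf{Step 1: columns are $0/1$ vectors with disjoint supports.} Take $x=e_j$ and $\sigma=\mathbbm{1}_{\{1\}}$, the indicator of the point $1$.
\begin{itemize}[label=--]
\item We have $\sigma(e_j)=e_j$, since $\sigma(0)=0$. The identity becomes $Ae_j=\sigma(Ae_j)$, so every entry of the column $a_j:=Ae_j$ lies in $\{0,1\}$.
\item Now take $x=e_j+e_k$ with $j\ne k$ and the same $\sigma$. The left side is $A(e_j+e_k)=a_j+a_k$. The right side is $\sigma(a_j+a_k)$, which is again a $0/1$ vector.
\item Hence $a_j+a_k$ has entries in $\{0,1\}$. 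Since $a_j,a_k$ are themselves $0/1$ vectors, they cannot both have a $1$ in the same row, so their supports are disjoint.
\end{itemize}

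\textbf{Step 2: each column has exactly one $1$.} Since $A$ is invertible, no column is zero, so every column has at least one $1$. The $n$ supports are pairwise disjoint and nonempty, and together they sit inside the $n$ rows. Counting forces each support to be a single row, and distinct columns to use distinct rows. Therefore $A$ is a permutation matrix.

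\textbf{Main obstacle.} There is no real difficulty here; the only point needing care is choosing $\sigma$ so that $\sigma(0)=0$, which makes $\sigma(e_j)=e_j$. Continuity of $\sigma$ is not needed, because the hypothesis quantifies over all functions. If one insisted on continuous $\sigma$, one could instead use a continuous bump function equal to $1$ at $1$ and $0$ at $0$ and at $2$, and then handle the remaining values of entries via polynomial interpolation. Under the lemma as stated, the indicator function suffices.
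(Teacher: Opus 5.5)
Your proof is correct. The first half matches the paper: both test equivariance at the basis vectors $e_j$ with an activation satisfying $\sigma(0)=0$ to force every entry of $\rho(g)$ into $\{0,1\}$. The indicator of $\{1\}$ is just an explicit choice of the violating $\sigma$ whose existence the paper asserts.

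The second half takes a different route.
\begin{itemize}
\item The paper never tests more than single basis vectors. It uses the group structure instead: $\rho(g^{-1})$ is also a $0/1$ matrix and $\rho(g)\rho(g^{-1})=I$. A row of $\rho(g)$ with two ones then forces the two corresponding rows of $\rho(g^{-1})$ to vanish outside one column, which the paper plays off against invertibility and the $(1,1)$ entry of the product to get a contradiction. The paper only rules out rows with two ones and leaves the final counting implicit.
\item You spend one more test vector, $e_j+e_k$, to get disjoint column supports directly. Invertibility of the single matrix $\rho(g)$ plus a pigeonhole count then finishes the argument.
\end{itemize}
Your version works for any one invertible matrix commuting with entrywise maps, since it never needs the inverse to satisfy the hypothesis, and it carries out the counting explicitly. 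The paper's version uses fewer test inputs but relies on $\rho$ being a homomorphism.

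Your remark about continuous activations can be simplified. The single polynomial activation $\sigma(t)=t^2$ already gives both steps, because the identities $a_{ij}^2=a_{ij}$ and $(a_{ij}+a_{ik})^2=a_{ij}+a_{ik}$ force exactly the same conclusions. No bump functions or interpolation are needed.
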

In other words, the action of $G$ on $\mathbb{R}^n$ is induced from 
some action of $G$ on the \textit{set} $X:= \{1, \dotsb, n\}$.
Lemma~\ref{lem:nonlin-permute} follows from known results in the literature \cite{WOOD1996,pacini2024a,godfrey-2022-symmetries}, we provide a self-contained simple proof in the Supplementary Material.

Note that it has been shown \cite{ravanbakhshUniversalEquivariantMultilayer2020}
that for universal \textit{approximation} of $G$-equivariant maps using
MLPs with one hidden layer, it is enough to take $X = \bigsqcup_{s=1}^n G$. 
That is, the regular $G$-set alone is enough.
It is also known that not all $G$-sets yield universal approximation~\cite{pacini2025on}.
It would be of independent interest
to understand which combination of homogeneous spaces realizes 
approximations of $G$-equivariant functions most efficiently.

\subsection{Multi-head Self-Attention}
\label{sec:attn}
Our equivariant attention layers will be maps
$
	\mathrm{attn}: \mathbb{R}^{\mathcal{H}}\otimes V \rightarrow \mathbb{R}^{\mathcal{H}}\otimes V
$
that are equivariant to the $G$-action on $\mathbb{R}^{\mathcal{H}}\otimes V$.
Note that $G$ acts on both factors of the tensor product, 
but only equivariance on the second factor is nontrivial, since 
self-attention is permutation-equivariant on $\mathcal{H}$.

Contrary to
Refs.~\cite{bokmanFloppingFLOPsLeveraging2025,nordstromOcticVisionTransformers2025},
we will outline a general construction for equivariant self-attention and then
describe two special cases that are in some sense opposite to each other. 
The underlying principle that guarantees equivariance is to compute invariant
attention scores
\cite{assaadVNTransformerRotationEquivariantAttention2023,kunduSteerableTransformersVolumetric2025,nordstromOcticVisionTransformers2025}.
In fact, our construction can be summarized as \textit{ordinary multi-head
self-attention with respect to a $G$-invariant inner product and a $G$-stable
orthogonal decomposition of $V$}. This is elaborated in the following.

Equip $V$ with a $G$-invariant inner product $\braket{\cdot , \cdot}$, and
suppose $V = V_1 \oplus \dotsb \oplus V_h$ is an orthogonal decomposition of $V$
into subspaces stable under $G$. We will refer to these subspaces as attention
heads. 
Let $\phi_q, \phi_k, \phi_v\in \mathrm{Aff}_G(V,V)$ be learnable $G$-equivariant
affine maps. The raw attention scores $\alpha^{(r)}: \mathcal{H}\times \mathcal{H}\rightarrow \mathbb{R}$ in the $r$-th head are computed according to
\begin{equation}
\alpha^{(r)}(a,b) = \braket{\Pi_r\phi_q x(a), \Pi_r\phi_k x(b)},
\end{equation}
where $\Pi_r: V\rightarrow V_r$ is the orthogonal projection onto the $r$-th head.
The output token $y\in C(\mathcal{H}, V)$ in the $r$-th head is given by 
$
y^{(r)}(a) = \sum_{b\in\mathcal{H}}s^{(r)}(a, b) \Pi_r\phi_v x(b),
$
where $s^{(r)}(a,b)$ are the attention probabilities, obtained by taking softmax
of $\alpha^{(r)}$ over the second entry. The output token at $a\in \mathcal{H}$ is simply $\phi_o(y^{(1)}(a) \oplus \dotsb \oplus y^{(h)}(a))$, where $\phi_o \in \mathrm{Aff}_G(V,V)$ is a learnable output projection.

There are several possibilities for the choice of the orthogonal decomposition of $V$.
For example, we could decompose each irrep into $h_\rho$ heads, $V = \bigoplus_{\rho\in \widehat{G}} \bigoplus_{r=1}^{h_\rho}\mathbb{R}^{C_\rho/h_\rho}\otimes V_\rho$,
called \textit{irrep-wise} attention.
Another natural choice is to take $V_r = \bigoplus_{\rho\in \widehat{G}} \mathbb{R}^{C_\rho/h}\otimes V_\rho$, which we call \textit{coupled} attention.

How
expressive is our construction? More precisely, can all functions that are
$G$-equivariant and expressible using an ordinary self-attention layer be
expressed using a $G$-equivariant self-attention layer presented in this
section? We answer this question affirmatively in the single-head case
(see the Supplementary Material for the proof):
\begin{theorem}
\label{thm:attn-rep}
    Let $V$ be an orthogonal $G$-representation. Let $\mathrm{attn}:
    \mathbb{R}^L\otimes V\rightarrow \mathbb{R}^L\otimes V$ be a $G$-equivariant
    function representable by one layer of single-head ordinary self-attention,
    potentially with biases in the query, key, and value maps.
    Then $\mathrm{attn}$ is representable by one layer of $G$-equivariant
    single-head self-attention.
    That is, the query, key, and value maps can be taken to be $G$-equivariant.
\end{theorem}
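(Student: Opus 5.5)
The plan is to show that a single-head self-attention map determines its effective parameters. Equivariance of $\mathrm{attn}$ then forces those parameters to be $G$-equivariant, and equivariant query, key and value maps realizing them can be written down directly. Write the ordinary layer as
\[
\mathrm{attn}(x)(a)=\sum_{b}\mathrm{softmax}_b\bigl(\langle Qx(a)+q,\,Kx(b)+k\rangle\bigr)\,\bigl(Wx(b)+w\bigr),
\]
where $Q,K,W$ are linear maps and $q,k,w$ are bias vectors (an output projection, if present, is absorbed into $W$ and $w$). Expanding the score gives
\[
x(a)^\top M x(b)+u^\top x(b)+(\text{terms depending only on }x(a)),
\qquad M=Q^\top K,\quad u=K^\top q .
\]
Softmax over $b$ is invariant under adding a function of $a$ alone, so the map depends only on the triple $(M,u,\omega)$, where $\omega(y)=Wy+w$ is the affine value map.

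\textbf{Step 1: identifiability.} The value map is recovered from the input with all $L$ tokens equal to $y$: the output at every position is exactly $\omega(y)$. If $L=1$ or $W=0$, the scores never matter. In that case equivariance only says that $\omega$ is equivariant, so $w$ must be $G$-invariant, and we may take $Q=K=0$.

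Otherwise take $L\ge 2$ and $W\neq 0$. Use inputs with token $x$ at position $1$ and token $y$ at the other $L-1$ positions. The output at position $1$ is a convex combination $p\,\omega(x)+(1-p)\,\omega(y)$, and the weight $p$ is determined by
\[
h(x,y)=x^\top M y+u^\top y-x^\top M x-u^\top x .
\]
Wherever $\omega(x)\neq\omega(y)$, both $p$ and hence $h(x,y)$ can be read off from the output. This set of pairs is open and dense because $W\neq0$, so by continuity $h$ is determined everywhere. The $y$-gradient of $h$ is $M^\top x+u$, and since $x$ is arbitrary this determines $M$ and $u$.

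\textbf{Step 2: equivariance of the parameters.} For each $g\in G$, the map $x\mapsto \rho(g)^{-1}\mathrm{attn}(\rho(g)x)$ is again a single-head attention map, with parameters
\[
\bigl(\rho(g)^\top M\rho(g),\;\rho(g)^\top u,\;\rho(g)^{-1}\omega(\rho(g)\,\cdot)\bigr),
\]
using orthogonality of $\rho$. By equivariance this map equals $\mathrm{attn}$. Step 1 then gives $M\in\mathrm{End}_G(V)$ and $u\in V^G$, and shows that $\omega$ is equivariant, so $W\in\mathrm{End}_G(V)$ and $w\in V^G$.

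\textbf{Step 3: equivariant realization.} Take $K=\mathrm{Id}$, $k=0$, $Q=M^\top$ and $q=u$. Then $Q\in\mathrm{End}_G(V)$ because the adjoint of an equivariant map is equivariant for an orthogonal representation. The bias $q=u$ is $G$-invariant, so it lies in the trivial isotypic component, as Section~\ref{sec:linear} requires. The value map is $\omega$ itself. This choice reproduces the same $(M,u,\omega)$ and therefore the same function.

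I expect the main obstacle to be Step 1: making rigorous that the scores are recoverable, in particular handling the degenerate locus where $\omega(x)=\omega(y)$ and checking that the continuity argument correctly pins down the bilinear and linear parts of $h$. If the argument with $L-1$ copies of $y$ turns out awkward, I would instead fix $L=2$ by zero-padding reasoning, or use a more explicit formula for $p$ as a function of $h$.
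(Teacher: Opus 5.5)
Your proposal is correct and follows essentially the paper's argument. You recover the value map from constant token sequences, pin down the bilinear and linear parts of the score by monotonicity of the softmax weight on an open dense set plus continuity, and realize the layer with key map $\mathrm{Id}$, query matrix $M^\top$ and query bias $u$. Casting this as an identifiability statement applied to $\rho(g)^{-1}\circ\mathrm{attn}\circ\rho(g)$, with $y$ in the other slots rather than zeros, only repackages the paper's direct comparison in its equivariance equation. Your choice $Q=M^\top$ is in fact the correct one where the paper writes $\tilde w_q = M$.
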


\subsection{Class Token and Invariantization}
For classification, which is the task considered in our experiments in
Section~\ref{sec:experiments}, we append a class token $\mathrm{cls}\in V$ right
before the first transformer block. Mathematically, this means we replace
$\mathcal{H}$ with $\mathcal{H}\sqcup \{\star\}$, where $\star$ is a single point
that is invariant under the $G$-action, and the class token is the token at
$\star$. For this procedure to be $G$-equivariant, the initial class token itself has to
belong to the invariant subspace of $V$. That is, it can only be nonzero in the
trivial representation.
Note that after passing through an attention layer, the non-trivial parts of the class token will in general be nonzero via interaction with the other tokens.
The class token is plucked out after the final 
transformer block and its features are used in a final linear layer for
classification. Since one expects the classification model to be
\textit{invariant}, that is, the output logits should remain the same if the
input image is transformed by a group element, the class token must be
\textit{invariantized} before the linear classification head.

Following Ref.~\cite{nordstromOcticVisionTransformers2025},
we use the following map for invariantization:
\begin{equation}
	\label{eq:invariant}
	V \ni \mathrm{cls} \mapsto \mathrm{cls}^{\rm{trivial}}\oplus \bigoplus_{\rho\in \widehat G, \rho\neq \rm{trivial}}\lVert \mathrm{cls}^{\rho}\rVert_{V_\rho},
\end{equation}
which is followed by concatenation along the channel dimension. Here,
$\lVert\cdot \rVert_{V_\rho}$ is any $G$-invariant norm on $V_\rho$. Since we
choose all representation matrices to be orthogonal in practice, we can simply
use the $L^2$ norm.


\subsection{The Embedding Theorem}
In this section, we will show that, roughly speaking, ``in a fixed network
architecture, more equivariance means less expressivity''. In slightly more
technical terms, the map that takes a discrete group $G\le \mathrm{O}(2)$
and sends it to the set of functions expressible by a $G$-equivariant
transformer of fixed depth is inclusion-reversing.

Let $\mathcal{F}_G\left[\delta, h; (C_\rho)_{\rho\in \widehat{G}}, (n_{\alpha})_{\alpha \in \mathrm{Sub}(G)/\sim}\right]$ denote the set of functions 
\begin{equation}
\text{E-ViT}: \mathbb{R}^{\mathcal{H}_0}\otimes \mathbb{R}^3 \rightarrow \mathbb{R}^{\mathcal{H}}
\otimes \left[\bigoplus_{\rho\in \widehat{G}}\mathbb{R}^{C_\rho}\otimes V_\rho\right]
\end{equation}
expressible as a composition
$
\mathrm{Block}_\delta \circ \dotsb \circ \mathrm{Block}_1 \circ \mathrm{PosEnc} \circ \mathrm{PE},
$
where $\mathrm{PE}$ and $\mathrm{PosEnc}$ are the patch embedding and positional encoding
layers described in Sec.~\ref{sec:pe_pos_enc}, and each $\mathrm{Block}_k$ is a
transformer block with $h$-head coupled self-attention (see Sec.~\ref{sec:attn}) whose
MLP involves $n_\alpha$ copies of the $\alpha$-th homogeneous space (see
Sec.~\ref{sec:nonlin}).

\begin{theorem}
\label{thm:expressivity}
Let $H\le G$ be a subgroup. Fix an $H$-equivariant linear isometric bijection
\begin{equation}
    \mathrm{Res}^G_H: 
    \underbrace{\bigoplus_{\rho\in \widehat{G}}\mathbb{R}^{C_\rho}\otimes V_\rho}_{=: V}
    \xrightarrow{\sim}
    \underbrace{\bigoplus_{\sigma\in \widehat{H}}\mathbb{R}^{D_\sigma}\otimes W_\sigma.}_{=:W}
\end{equation}
Note that the multiplicities $D_\sigma$ are uniquely determined. Then 
\begin{equation}
\begin{aligned}
&(\mathrm{Id}\otimes \mathrm{Res}^G_H)\circ \mathcal{F}_G\left[\delta,h; (C_\rho)_{\rho\in \widehat{G}}, (n_{\alpha})_{\alpha \in \mathrm{Sub}(G)/\sim}\right]\\
&\subset
\mathcal{F}_H\left[\delta,h; (D_\sigma)_{\sigma\in \widehat{H}}, (m_{\beta})_{\beta \in \mathrm{Sub}(H)/\sim}\right],
\end{aligned}
\end{equation}
where the numbers of homogeneous space copies $m_\beta$ are determined as follows:
the $G$-space 
\begin{equation}
X:= \bigsqcup_{\alpha\in \mathrm{Sub}(G)/\sim}
\bigsqcup_{s=1}^{n_\alpha} X_\alpha
\end{equation}
decomposes into a disjoint union of $H$-orbits. $m_\beta$
is then the number of times the $\beta$-th homogeneous $H$-space appears in $X$.

Moreover, if $\dim \mathrm{Hom}_H(\mathbb{R}^U, W) > \dim \mathrm{Hom}_G(\mathbb{R}^U, V)$, then the inclusion is strict.
\end{theorem}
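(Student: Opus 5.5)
The plan is to show that every layer of a $G$-equivariant ViT, once conjugated by $R:=\mathrm{Res}^G_H$, becomes a layer of the corresponding $H$-equivariant type. Because the composition in Eq.~\eqref{eq:equivit-composition} is built layer by layer, the inclusion then follows: given $\text{E-ViT}=\mathrm{Block}_\delta\circ\dotsb\circ\mathrm{PosEnc}\circ\mathrm{PE}\in\mathcal{F}_G$, write
\[
(\mathrm{Id}\otimes R)\circ\text{E-ViT}=\bigl[(\mathrm{Id}\otimes R)\mathrm{Block}_\delta(\mathrm{Id}\otimes R^{-1})\bigr]\circ\dotsb\circ\bigl[(\mathrm{Id}\otimes R)\,\mathrm{PosEnc}\,(\mathrm{Id}\otimes R^{-1})\bigr]\circ\bigl[(\mathrm{Id}\otimes R)\,\mathrm{PE}\bigr],
\]
and check that each bracket is a layer of the $H$-architecture with the stated hyperparameters.

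\textbf{Affine layers, patch embedding and positional encoding.} If $\phi\in\mathrm{Aff}_G(V,V)$, then $R\phi R^{-1}$ is $H$-equivariant and affine. Its bias lies in the $G$-invariants of $V$, hence in the $H$-invariants, so it sits in the trivial $H$-isotypic component. Since $\mathrm{Aff}_H(W,W)$ is exactly the space of all $H$-equivariant affine maps (Sec.~\ref{sec:linear}), $R\phi R^{-1}\in\mathrm{Aff}_H(W,W)$. For the patch embedding, $R\circ l\in\mathrm{Hom}_H(\mathbb{R}^U\otimes\mathbb{R}^3,W)$. Here we use that $U$ and $\mathcal{H}$ are also $H$-stable, so the $G$-patchified grid is an $H$-patchified grid, and the unfold map $\mathcal{P}$ is literally the same map. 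A $G$-invariant positional encoding $p$ is $H$-invariant, so $(\mathrm{Id}\otimes R)p$ is an admissible $H$-positional encoding.

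\textbf{Attention and MLP.} Because $R$ is an isometry, the $G$-invariant inner product is transported to an $H$-invariant one on $W$. One must also transport the coupled head decomposition $V_r=\bigoplus_\rho\mathbb{R}^{C_\rho/h}\otimes V_\rho$. Its image $R(V_r)$ is $H$-stable, orthogonal, and isomorphic to $\bigoplus_\sigma\mathbb{R}^{D_\sigma/h}\otimes W_\sigma$. It need not equal the standard coupled heads in $W$, though. I would fix this by composing $R$ with an $H$-equivariant isometry of $W$ that carries each $R(V_r)$ onto the $r$-th standard head; such an isometry exists since the heads are isomorphic orthogonal $H$-representations. This requires either choosing $\mathrm{Res}^G_H$ suitably or noting that the function class is invariant under conjugation by equivariant isometries, because the affine maps absorb them. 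With that in place, the scores $\langle\Pi_r\phi_q x,\Pi_r\phi_k x\rangle$ are preserved and the attention layer conjugates to an $H$-attention layer.

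For the MLP $l_2\circ(\mathrm{FT}^{-1}\circ\sigma\circ\mathrm{FT})\circ l_1$, the entrywise $\sigma$ acts on $C(X,\mathbb{R})$. Restricting the $G$-action on $X$ to $H$ gives an $H$-set whose orbit decomposition is precisely the multiplicities $m_\beta$. By Lemma~\ref{lem:nonlin-permute} and the discussion following Eq.~\eqref{eq:nonlin-ft}, this is the same nonlinear map up to $H$-equivariant linear bijections, which are absorbed into $l_1,l_2$. Residual connections commute with conjugation, which completes the inclusion. I expect the head-alignment bookkeeping in this step to be the main obstacle.

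\textbf{Strictness.} Suppose $\dim\mathrm{Hom}_H(\mathbb{R}^U,W)>\dim\mathrm{Hom}_G(\mathbb{R}^U,V)$. Then some $H$-equivariant patch-embedding filter $l'$ (applied to one color channel) is not of the form $R\circ l$ with $l$ $G$-equivariant. Take depth-$\delta$ $H$-networks whose blocks act as the identity: zero $\phi_o$ and zero $l_2$ give residual identities, and $p=0$. The resulting function $(\mathrm{Id}\otimes l')\circ\mathcal{P}$ then lies in $\mathcal{F}_H$.

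If it were of the form $(\mathrm{Id}\otimes R)\circ F$ with $F\in\mathcal{F}_G$, then $F$ would be linear and equal to $(\mathrm{Id}\otimes R^{-1}l')\mathcal{P}$. That map is not $G$-equivariant when restricted to images supported on a single patch, contradicting the $G$-equivariance of $F$. Therefore the inclusion is strict.
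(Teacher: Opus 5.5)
Your proposal is correct and follows essentially the same route as the paper: you transport each affine layer through $\mathrm{Res}^G_H$ (the paper's Lemma~\ref{lem:restriction}), align the coupled heads by an $H$-equivariant isometry absorbed into the query/key/value and output maps (the paper's $\Phi$ combined with its reshaping maps), identify $X$ with $\bigsqcup_\beta\bigsqcup_{s} Y_\beta$ as $H$-sets for the MLP, and prove strictness with a network that collapses to its patch embedding. For the head alignment, only your second option (absorbing the isometry into the affine maps inside each attention layer) is both needed and valid, since $\mathrm{Res}^G_H$ is fixed in the statement and cannot be re-chosen; the appeal to Lemma~\ref{lem:nonlin-permute} in the MLP step is unnecessary but harmless.
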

The proof of Theorem~\ref{thm:expressivity} is in the Supplementary Material.

Apart from providing a rigorous formulation of the bias-expressivity tradeoff 
in the context of equivariant ViTs, Theorem~\ref{thm:expressivity} 
is also of practical value: it allows one to view a $G$-equivariant ViT
as an $H$-equivariant ViT, opening the door to gradual symmetry-breaking, for
example, by enforcing $G$-equivariance in early training epochs, and then
procedurally relaxing to smaller subgroups at later stages.

\section{Experiments}
\label{sec:experiments}
In this section, we carry out experiments with equivariant ViTs for subgroups of
$D_4$ and $D_6$ on the PatternNet dataset. For the models equivariant to $D_6$
and its subgroups, we consider the hexagonal lattice structure for the
underlying $G$-patchified grid (see Fig.~\ref{fig:d6-filters}).

The PatternNet data set \cite{zhouPatternNetBenchmarkDataset2018} consists of
$\text{30,400}$ aerial images divided into 38 classes, each of which has $800$
images. We split the images within each class into $80\%$ training and $20\%$
validation data. All models are trained using AdamW with (unweighted) cross
entropy as loss function, and evaluated using mean accuracy as the primary metric.
We refer the reader to the Supplementary Material for further experiment details.

\subsection{Does equivariance matter?}
\label{sec:exp-equivariance}

Our first experiment aims to test the hypothesis 
that equivariance (more precisely invariance since the task is classification) implies better sample efficiency. This is intuitively plausible:
for a $G$-invariant model, a single labeled example $(x, y)$ is
equivalent to $|G|$ examples, namely $\{(gx, y) | g\in G\}$.
In other words, the model has built-in data augmentation.

To this end, we train $G$-invariant classifiers on $10\%, 40\%,$ and $100\%$ of
the training data for $G=D_4, C_4, D_2, C_1$ on the square grid and $G=D_6, C_6,
D_3, C_1$ on the hexagonal grid. The same set of data is always used at each
sampling fraction. We adjust the maximum training epoch and early stopping
patience to roughly compensate for the reduced number of training examples.

Coupled equivariant attention with $h=3$ attention heads is employed in this experiment. For
all models, we take the feature space $V$ to be $3q$ copies of the regular
representation with $q=1,2,\dotsb$, so that each head in the attention layer has
$q$ copies of the regular representation. The results are shown in
Fig.~\ref{fig:exp_equivariance}.
Note that there are significantly more runs with sample ratio $=0.1$
due to high variance.

\begin{figure}[htbp]
\centering
\includegraphics[width=.98\textwidth]{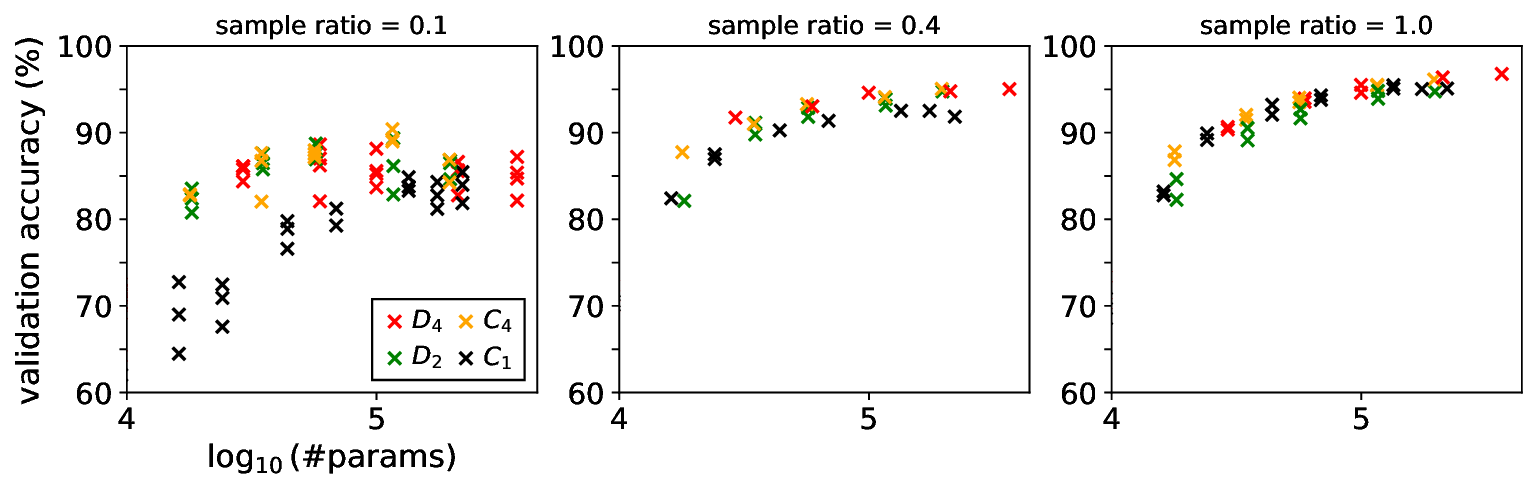}

\includegraphics[width=.98\textwidth]{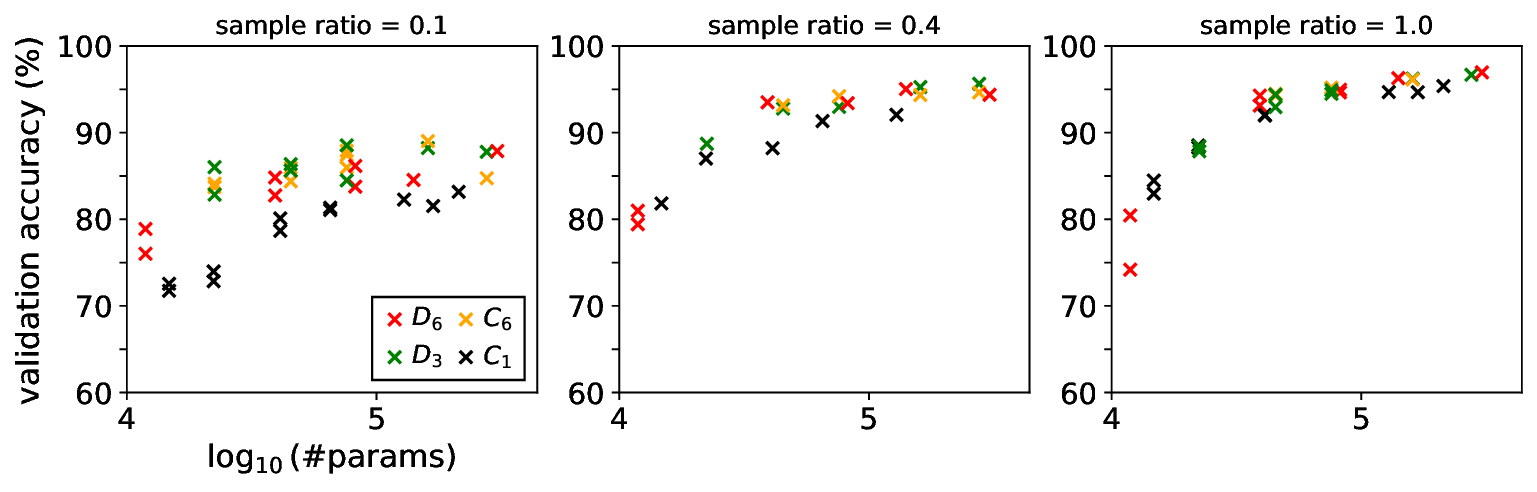}

\caption{Top: equivariant models on the square grid. Bottom: equivariant models on
the hexagonal grid.}
\label{fig:exp_equivariance}
\end{figure}

\subsection{Comparison of attention mechanisms}
\label{sec:exp-attn-type}
Here, we compare the two attention mechanisms detailed in 
Section~\ref{sec:attn}: irrep-wise and coupled. For this experiment, we take $G=
C_4$ with $4q$ copies of the regular representation, with $q=1, 2, 3,\dotsb$,
and we use four attention heads in total for both attention types (see
Table~\ref{tab:attn_types}). The attention heads are configured such that there
are always $4q$ features per token per head (note that $\rmE_1$ is
two-dimensional). Two sets of experiments are carried out:
one with only $10\%$ of the training data, and one with the complete training data.
The results are shown in Fig.~\ref{fig:exp_attn_type}.

\begin{table}[htbp]
\centering
\begin{tabular}{|c| c |c|}
\hline
    & coupled & irrepwise \\ 
    \hline
channel dimension & \multicolumn{2}{c|}{$C_{\rmA}= C_{\rmB} = C_{\rmE_1}= 4q$} \\
\hline
irreps in each head & $\rmA^{\oplus q} \oplus \rmB^{\oplus q} \oplus \rmE_1^{\oplus q}$  (all heads) &  
    \makecell{$\rmA^{\oplus 4q}$\\$ \rmB^{\oplus 4q}$\\$\rmE^{\oplus 2q}$\\$\rmE^{\oplus 2q}$}\\
\hline
\end{tabular}
\vspace{.5em}
\caption{Chosen configuration of attention heads for the two attention mechanisms for the $C_4$-equivariant transformer.}
\label{tab:attn_types}
\end{table}

\begin{figure}[htbp]
\centering
\includegraphics[width=.7\textwidth]{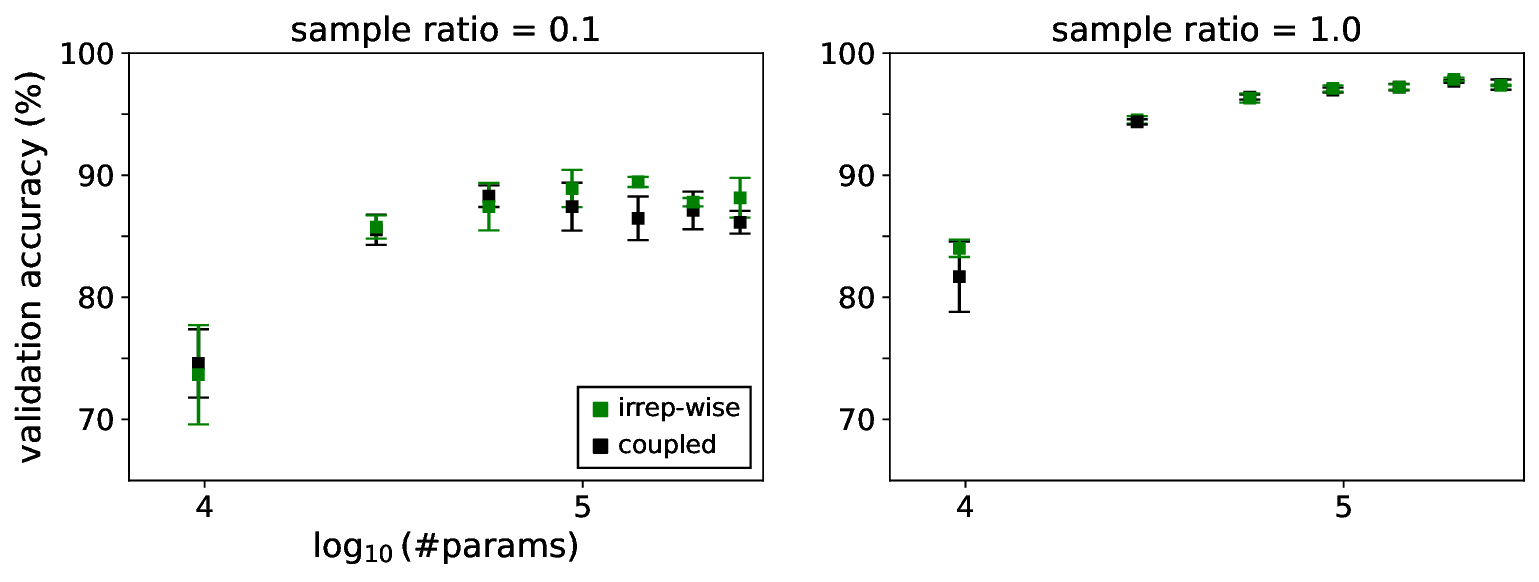}
\vspace{-1em}
\caption{Comparison of irrep-wise and coupled attention 
with 10\% (left) and 100\% (right) of the training data.
}
\label{fig:exp_attn_type}
\end{figure}

\subsection{Comparison of homogeneous space combinations in MLP}
\label{sec:exp-homogs}
Finally, we study the performance differences resulting from different
choices of the $G$-set $X$ in our construction of the nonlinear layer (see Section~\ref{sec:nonlin}). 
We take $G=D_4$ with $3q$ copies of the regular representation as the feature
space $V$, and consider here three families of $G$-sets, each formed by $n$
copies of $D_4$ (regular action), $n$ copies of $D_4$ plus $8$ points (trivial
action), and $n$ copies of $D_4/\braket{r} \sqcup D_4/\braket{t} \sqcup
D_4/\braket{tr, r^2}$ (see the Supplementary Material for a list of homogeneous
spaces of $D_4$). These are chosen so that all irreps appear at least once in
$X$, preventing information loss in the MLP layer.
Note that the second and third choices contain higher ratios of $\rmA_1$
features in the hidden layer ($\frac{9}{16}$ and $\frac{3}{8}$ respectively, as opposed to $\frac{1}{8}$).

\begin{figure}[htbp]
\centering
\includegraphics[width=.95\textwidth]{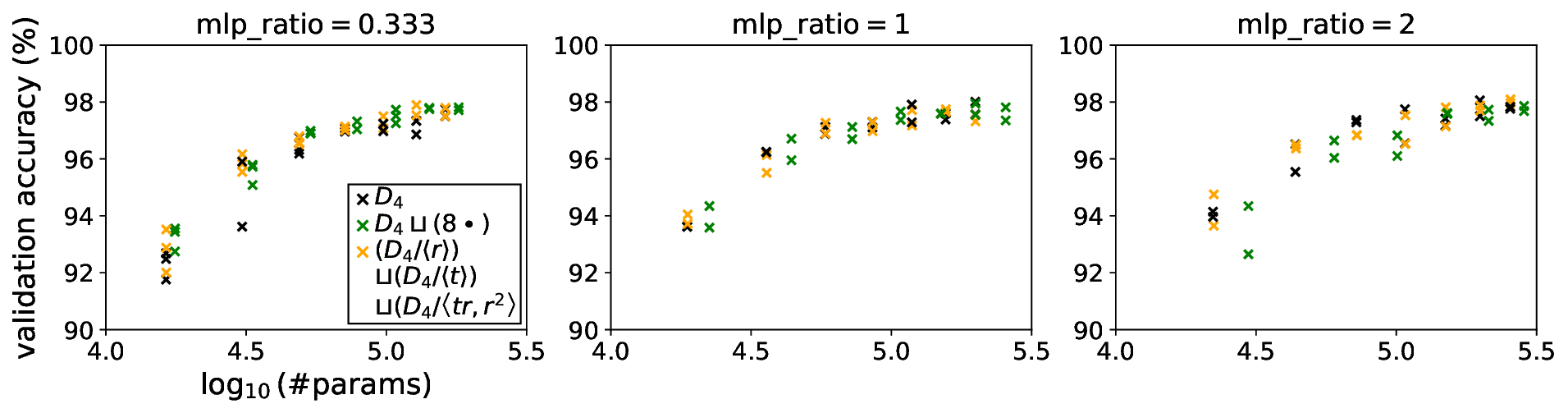}
\vspace{-1em}
\caption{Performance comparison of $D_4$-equivariant ViTs with different combinations of homogeneous spaces in the MLP layers.}
\label{fig:exp_mlp}
\end{figure}

Instead of varying the number of homogeneous spaces $n$ independently of $q$, we
fix three ``MLP ratios'', defined as $\frac{n}{3q}$.
The findings are summarized in Fig.~\ref{fig:exp_mlp}.

\subsection{Discussion}
Our first experiment (Section~\ref{sec:exp-equivariance}) confirms the intuition
that the significance of equivariance is magnified in the low-data
regime: while the nonequivariant ViT performs significantly worse than the
equivariant counterparts when trained on $10\%$ of the data, all ViT variants
are practically indistinguishable when all of the training data is used.
In fact, the $D_2$-equivariant model seems to perform slightly worse than the
nonequivariant one. 
However, it is not clear whether more equivariance always entails better
performance: at $10\%$ sample ratio, the $D_3$-equivariant model 
achieves slightly better accuracies than the $D_6$-equivariant counterpart
at fixed parameter counts.

The last two experiments show that the choice of attention type
(Section~\ref{sec:exp-attn-type}) and of $G$-set $X$ for the MLP layers
(Section~\ref{sec:exp-homogs}) do not affect performance in a drastic way. We can
only conclude that marginal accuracy gain is achieved with higher fractions of
$\rmA_1$ features in the MLP hidden layer (Fig.~\ref{fig:exp_mlp}) when the MLP
ratio is low, and with irrep-wise attention as opposed to coupled attention (see
Fig.~\ref{fig:exp_attn_type}).
Nevertheless, one cannot rule out the possibility that perturbations in
other hyperparameters might change this conclusion.

One potential explanation for the observed worse performance with larger
symmetry groups is that for classification tasks, especially for
``low-frequency'' images, in the sense of Fourier transforms, like those in
PatternNet (e.g. beaches, baseball fields, runways), non-trivial irrep features
are not as crucial. This leads us to the following conjecture:
\begin{conjecture}
A $G$-invariant classification model with a $G$-equivariant ViT backbone mostly
uses features from the trivial representation.
\end{conjecture}
If this is true, then it might be more difficult for a $D_4$-equivariant model
to learn invariant features compared to a $D_2$-equivariant one if the
respective regular representations are used as feature spaces in both models, as the 
fraction of invariant features in $V$ is $1/|G|$.
This could also explain the apparent advantage of irrep-wise attention over
coupled attention (Section~\ref{sec:exp-attn-type}): if little useful information
is stored in nontrivial irreps, mixing these irreps with the trivial one in the
same attention head could result in noisy attention scores.


\section{Limitations and Future Work}

Due to large hyperparameter search space and limited computational resources,
all experimental results presented here involve very small models ($\lesssim$
0.5M parameters), trained on a relatively small dataset (PatternNet). It is not
clear whether similar results hold at scale. Indeed, as our experiments suggest,
equivariance could be most important when data are scarce.
However, in Ref.~\cite{bokmanFloppingFLOPsLeveraging2025}, the argument is made that equivariance can also be practical in large dataset regimes, due to the increased sparsification of the linear layers when increasing the size of the group $G$ while keeping the total channel dimension fixed.
At the model sizes used in this paper, the computational benefit is not visible, but we hope that our implementations will be useful to further characterize the scaling properties of equivariant ViTs in future work.

In addition to scaling up, there are multiple orthogonal avenues for future
exploration. First, it is in principle possible to construct ViTs equivariant to
groups $G\le \mathrm{O}(2)$ that are not subgroups of $D_4$ or $D_6$. However,
their construction will inevitably involve more irregular grids (c.f.
Definition~\ref{defn:patches}), and it would be interesting to understand the
trade-off between exact discrete equivariance and approximate continuous
equivariance.

Second, as already remarked after Theorem~\ref{thm:expressivity}, one can break
the symmetry, either ``statically'', by concatenating transformer blocks in a
way that later blocks have smaller symmetry groups , or ``dynamically'', 
by imposing a large symmetry group at early training stages, 
and then gradually relaxing the symmetry group to smaller ones.
We expect this kind of architectures or training schedule to be advantageous for 
datasets that do not respect rotational symmetries exactly.

Finally, for a fixed group $G\le \mathrm{O}(2)$, each choice of feature space $V$ must be independently tested in the current formulation.
It would be of great benefit for the practical use of the networks presented in this paper, and for group equivariant neural networks in general, to find a principled way to pick $V$ for a given task, or to optimize the choice as part of training a network.

\;\\
\noindent
\textbf{Code Availability.} The code used in this study will be made publicly available in a permanent online repository upon publication of the article.

\bibliographystyle{plainnat}

\bibliography{Refs, refs_editable}

@misc{eijkelboomEquivariantMessagePassing2023,
	title = {E(n) {Equivariant} {Message} {Passing} {Simplicial} {Networks}},
	url = {http://arxiv.org/abs/2305.07100},
	doi = {10.48550/arXiv.2305.07100},
	urldate = {2026-03-04},
	publisher = {arXiv},
	author = {Eijkelboom, Floor and Hesselink, Rob and Bekkers, Erik},
	month = oct,
	year = {2023},
	note = {arXiv:2305.07100 [cs]},
}

@misc{weiler3DSteerableCNNs2018,
	title = {{3D} {Steerable} {CNNs}: {Learning} {Rotationally} {Equivariant} {Features} in {Volumetric} {Data}},
	shorttitle = {{3D} {Steerable} {CNNs}},
	url = {http://arxiv.org/abs/1807.02547},
	doi = {10.48550/arXiv.1807.02547},
	urldate = {2026-03-01},
	publisher = {arXiv},
	author = {Weiler, Maurice and Geiger, Mario and Welling, Max and Boomsma, Wouter and Cohen, Taco},
	month = oct,
	year = {2018},
	note = {arXiv:1807.02547 [cs]},
}

@inproceedings{heEfficientEquivariantNetwork2021,
	title = {Efficient {Equivariant} {Network}},
	volume = {34},
	url = {https://proceedings.neurips.cc/paper/2021/hash/2a79ea27c279e471f4d180b08d62b00a-Abstract.html},
	urldate = {2026-02-24},
	booktitle = {Advances in {Neural} {Information} {Processing} {Systems}},
	publisher = {Curran Associates, Inc.},
	author = {He, Lingshen and Chen, Yuxuan and shen, zhengyang and Dong, Yiming and Wang, Yisen and Lin, Zhouchen},
	year = {2021},
	pages = {5290--5302},
}

@misc{fuchsSE3Transformers3DRotoTranslation2020,
	title = {{SE}(3)-{Transformers}: {3D} {Roto}-{Translation} {Equivariant} {Attention} {Networks}},
	shorttitle = {{SE}(3)-{Transformers}},
	url = {http://arxiv.org/abs/2006.10503},
	doi = {10.48550/arXiv.2006.10503},
	urldate = {2026-01-20},
	publisher = {arXiv},
	author = {Fuchs, Fabian B. and Worrall, Daniel E. and Fischer, Volker and Welling, Max},
	month = nov,
	year = {2020},
	note = {arXiv:2006.10503 [cs]},
}

@inproceedings{shawe-taylorBuildingSymmetriesFeedforward1989,
	title = {Building symmetries into feedforward networks},
	url = {https://ieeexplore.ieee.org/document/51951},
	urldate = {2026-03-01},
	booktitle = {1989 {First} {IEE} {International} {Conference} on {Artificial} {Neural} {Networks}, ({Conf}. {Publ}. {No}. 313)},
	author = {Shawe-Taylor, J.},
	month = oct,
	year = {1989},
	pages = {158--162},
}

@misc{assaadVNTransformerRotationEquivariantAttention2023,
	title = {{VN}-{Transformer}: {Rotation}-{Equivariant} {Attention} for {Vector} {Neurons}},
	shorttitle = {{VN}-{Transformer}},
	url = {http://arxiv.org/abs/2206.04176},
	doi = {10.48550/arXiv.2206.04176},
	urldate = {2026-03-01},
	publisher = {arXiv},
	author = {Assaad, Serge and Downey, Carlton and Al-Rfou, Rami and Nayakanti, Nigamaa and Sapp, Ben},
	month = jan,
	year = {2023},
	note = {arXiv:2206.04176 [cs]},
}

@misc{bekkersRotoTranslationCovariantConvolutional2018,
	title = {Roto-{Translation} {Covariant} {Convolutional} {Networks} for {Medical} {Image} {Analysis}},
	url = {https://arxiv.org/abs/1804.03393v3},
	language = {en},
	urldate = {2026-02-20},
	journal = {arXiv.org},
	author = {Bekkers, Erik J. and Lafarge, Maxime W. and Veta, Mitko and Eppenhof, Koen AJ and Pluim, Josien PW and Duits, Remco},
	month = apr,
	year = {2018},
}

@article{zhouPatternNetBenchmarkDataset2018,
	title = {{PatternNet}: {A} {Benchmark} {Dataset} for {Performance} {Evaluation} of {Remote} {Sensing} {Image} {Retrieval}},
	volume = {145},
	issn = {09242716},
	shorttitle = {{PatternNet}},
	url = {http://arxiv.org/abs/1706.03424},
	doi = {10.1016/j.isprsjprs.2018.01.004},
	urldate = {2026-02-19},
	journal = {ISPRS Journal of Photogrammetry and Remote Sensing},
	author = {Zhou, Weixun and Newsam, Shawn and Li, Congmin and Shao, Zhenfeng},
	month = nov,
	year = {2018},
	note = {arXiv:1706.03424 [cs]},
	pages = {197--209},
}

@misc{batatiaMACEHigherOrder2023,
	title = {{MACE}: {Higher} {Order} {Equivariant} {Message} {Passing} {Neural} {Networks} for {Fast} and {Accurate} {Force} {Fields}},
	shorttitle = {{MACE}},
	url = {http://arxiv.org/abs/2206.07697},
	doi = {10.48550/arXiv.2206.07697},
	urldate = {2026-01-30},
	publisher = {arXiv},
	author = {Batatia, Ilyes and Kovács, Dávid Péter and Simm, Gregor N. C. and Ortner, Christoph and Csányi, Gábor},
	month = jan,
	year = {2023},
	note = {arXiv:2206.07697 [stat]},
}

@misc{kondorClebschGordanNetsFully2018,
	title = {Clebsch-{Gordan} {Nets}: a {Fully} {Fourier} {Space} {Spherical} {Convolutional} {Neural} {Network}},
	shorttitle = {Clebsch-{Gordan} {Nets}},
	url = {http://arxiv.org/abs/1806.09231},
	doi = {10.48550/arXiv.1806.09231},
	urldate = {2026-01-13},
	publisher = {arXiv},
	author = {Kondor, Risi and Lin, Zhen and Trivedi, Shubhendu},
	month = nov,
	year = {2018},
	note = {arXiv:1806.09231 [stat]},
}

@misc{andersonCormorantCovariantMolecular2019,
	title = {Cormorant: {Covariant} {Molecular} {Neural} {Networks}},
	shorttitle = {Cormorant},
	url = {http://arxiv.org/abs/1906.04015},
	doi = {10.48550/arXiv.1906.04015},
	urldate = {2026-01-24},
	publisher = {arXiv},
	author = {Anderson, Brandon and Hy, Truong-Son and Kondor, Risi},
	month = nov,
	year = {2019},
	note = {arXiv:1906.04015 [physics]},
}

@misc{maronInvariantEquivariantGraph2019,
	title = {Invariant and {Equivariant} {Graph} {Networks}},
	url = {http://arxiv.org/abs/1812.09902},
	doi = {10.48550/arXiv.1812.09902},
	language = {en},
	urldate = {2026-01-20},
	publisher = {arXiv},
	author = {Maron, Haggai and Ben-Hamu, Heli and Shamir, Nadav and Lipman, Yaron},
	month = apr,
	year = {2019},
	note = {arXiv:1812.09902 [cs]},
}

@article{batznerE3equivariantGraphNeural2022,
	title = {E(3)-equivariant graph neural networks for data-efficient and accurate interatomic potentials},
	volume = {13},
	copyright = {2022 The Author(s)},
	issn = {2041-1723},
	url = {https://www.nature.com/articles/s41467-022-29939-5},
	doi = {10.1038/s41467-022-29939-5},
	language = {en},
	number = {1},
	urldate = {2026-01-30},
	journal = {Nature Communications},
	publisher = {Nature Publishing Group},
	author = {Batzner, Simon and Musaelian, Albert and Sun, Lixin and Geiger, Mario and Mailoa, Jonathan P. and Kornbluth, Mordechai and Molinari, Nicola and Smidt, Tess E. and Kozinsky, Boris},
	month = may,
	year = {2022},
	pages = {2453},
}

@misc{satorrasEquivariantGraphNeural2022,
	title = {E(n) {Equivariant} {Graph} {Neural} {Networks}},
	url = {http://arxiv.org/abs/2102.09844},
	doi = {10.48550/arXiv.2102.09844},
	urldate = {2026-01-24},
	publisher = {arXiv},
	author = {Satorras, Victor Garcia and Hoogeboom, Emiel and Welling, Max},
	month = feb,
	year = {2022},
	note = {arXiv:2102.09844 [cs]},
}

@misc{chatzipantazisSE3EquivariantAttentionNetworks2023,
	title = {{SE}(3)-{Equivariant} {Attention} {Networks} for {Shape} {Reconstruction} in {Function} {Space}},
	url = {http://arxiv.org/abs/2204.02394},
	doi = {10.48550/arXiv.2204.02394},
	urldate = {2026-01-28},
	publisher = {arXiv},
	author = {Chatzipantazis, Evangelos and Pertigkiozoglou, Stefanos and Dobriban, Edgar and Daniilidis, Kostas},
	month = feb,
	year = {2023},
	note = {arXiv:2204.02394 [cs]},
}

@misc{taiEquivariantTransformerNetworks2019,
	title = {Equivariant {Transformer} {Networks}},
	url = {http://arxiv.org/abs/1901.11399},
	doi = {10.48550/arXiv.1901.11399},
	urldate = {2026-01-23},
	publisher = {arXiv},
	author = {Tai, Kai Sheng and Bailis, Peter and Valiant, Gregory},
	month = may,
	year = {2019},
	note = {arXiv:1901.11399 [cs]},
}

@misc{kondorGeneralizationEquivarianceConvolution2018,
	title = {On the {Generalization} of {Equivariance} and {Convolution} in {Neural} {Networks} to the {Action} of {Compact} {Groups}},
	url = {http://arxiv.org/abs/1802.03690},
	doi = {10.48550/arXiv.1802.03690},
	urldate = {2026-01-21},
	publisher = {arXiv},
	author = {Kondor, Risi and Trivedi, Shubhendu},
	month = nov,
	year = {2018},
	note = {arXiv:1802.03690 [stat]},
}

@misc{veelingRotationEquivariantCNNs2018,
	title = {Rotation {Equivariant} {CNNs} for {Digital} {Pathology}},
	url = {http://arxiv.org/abs/1806.03962},
	doi = {10.48550/arXiv.1806.03962},
	urldate = {2026-02-07},
	publisher = {arXiv},
	author = {Veeling, Bastiaan S. and Linmans, Jasper and Winkens, Jim and Cohen, Taco and Welling, Max},
	month = jun,
	year = {2018},
	note = {arXiv:1806.03962 [cs]},
}

@misc{cohenGeneralTheoryEquivariant2020,
	title = {A {General} {Theory} of {Equivariant} {CNNs} on {Homogeneous} {Spaces}},
	url = {http://arxiv.org/abs/1811.02017},
	doi = {10.48550/arXiv.1811.02017},
	urldate = {2026-01-13},
	publisher = {arXiv},
	author = {Cohen, Taco and Geiger, Mario and Weiler, Maurice},
	month = jan,
	year = {2020},
	note = {arXiv:1811.02017 [cs]},
}

@misc{cohenGroupEquivariantConvolutional2016,
	title = {Group {Equivariant} {Convolutional} {Networks}},
	url = {https://arxiv.org/abs/1602.07576v3},
	doi = {10.48550/arXiv.1602.07576},
	language = {en},
	urldate = {2023-02-13},
	journal = {arXiv.org},
	author = {Cohen, Taco S. and Welling, Max},
	month = feb,
	year = {2016},
}

@misc{weilerGeneral$E2$EquivariantSteerable2021,
	title = {General \${E}(2)\$-{Equivariant} {Steerable} {CNNs}},
	url = {http://arxiv.org/abs/1911.08251},
	doi = {10.48550/arXiv.1911.08251},
	urldate = {2026-01-20},
	publisher = {arXiv},
	author = {Weiler, Maurice and Cesa, Gabriele},
	month = apr,
	year = {2021},
	note = {arXiv:1911.08251 [cs]},
}

@misc{kabaEquivarianceLearnedCanonicalization2023,
	title = {Equivariance with {Learned} {Canonicalization} {Functions}},
	url = {http://arxiv.org/abs/2211.06489},
	doi = {10.48550/arXiv.2211.06489},
	urldate = {2026-01-28},
	publisher = {arXiv},
	author = {Kaba, Sékou-Oumar and Mondal, Arnab Kumar and Zhang, Yan and Bengio, Yoshua and Ravanbakhsh, Siamak},
	month = jul,
	year = {2023},
	note = {arXiv:2211.06489 [cs]},
}

@misc{xu$E2$EquivariantVisionTransformer2023,
	title = {\${E}(2)\$-{Equivariant} {Vision} {Transformer}},
	url = {http://arxiv.org/abs/2306.06722},
	doi = {10.48550/arXiv.2306.06722},
	urldate = {2026-01-14},
	publisher = {arXiv},
	author = {Xu, Renjun and Yang, Kaifan and Liu, Ke and He, Fengxiang},
	month = jul,
	year = {2023},
	note = {arXiv:2306.06722 [cs]},
}

@misc{bronsteinGeometricDeepLearning2021,
	title = {Geometric {Deep} {Learning}: {Grids}, {Groups}, {Graphs}, {Geodesics}, and {Gauges}},
	shorttitle = {Geometric {Deep} {Learning}},
	url = {http://arxiv.org/abs/2104.13478},
	doi = {10.48550/arXiv.2104.13478},
	urldate = {2025-12-05},
	publisher = {arXiv},
	author = {Bronstein, Michael M. and Bruna, Joan and Cohen, Taco and Veličković, Petar},
	month = may,
	year = {2021},
	note = {arXiv:2104.13478 [cs]},
}

@misc{bokmanFloppingFLOPsLeveraging2025,
	title = {Flopping for {FLOPs}: {Leveraging} equivariance for computational efficiency},
	shorttitle = {Flopping for {FLOPs}},
	url = {http://arxiv.org/abs/2502.05169},
	doi = {10.48550/arXiv.2502.05169},
	urldate = {2026-01-28},
	publisher = {arXiv},
	author = {Bökman, Georg and Nordström, David and Kahl, Fredrik},
	month = jun,
	year = {2025},
	note = {arXiv:2502.05169 [cs]},
}

@misc{kunduSteerableTransformersVolumetric2025,
	title = {Steerable {Transformers} for {Volumetric} {Data}},
	url = {http://arxiv.org/abs/2405.15932},
	doi = {10.48550/arXiv.2405.15932},
	urldate = {2026-01-28},
	publisher = {arXiv},
	author = {Kundu, Soumyabrata and Kondor, Risi},
	month = oct,
	year = {2025},
	note = {arXiv:2405.15932 [cs]},
}

@misc{islamPlatonicTransformersSolid2025,
	title = {Platonic {Transformers}: {A} {Solid} {Choice} {For} {Equivariance}},
	shorttitle = {Platonic {Transformers}},
	url = {http://arxiv.org/abs/2510.03511},
	doi = {10.48550/arXiv.2510.03511},
	urldate = {2026-01-14},
	publisher = {arXiv},
	author = {Islam, Mohammad Mohaiminul and Anand, Rishabh and Wessels, David R. and Kruiff, Friso de and Kuipers, Thijs P. and Ying, Rex and Sánchez, Clara I. and Vadgama, Sharvaree and Bökman, Georg and Bekkers, Erik J.},
	month = oct,
	year = {2025},
	note = {arXiv:2510.03511 [cs]},
}

@misc{jaderbergSpatialTransformerNetworks2016,
	title = {Spatial {Transformer} {Networks}},
	url = {http://arxiv.org/abs/1506.02025},
	doi = {10.48550/arXiv.1506.02025},
	urldate = {2026-01-24},
	publisher = {arXiv},
	author = {Jaderberg, Max and Simonyan, Karen and Zisserman, Andrew and Kavukcuoglu, Koray},
	month = feb,
	year = {2016},
	note = {arXiv:1506.02025 [cs]},
}

@misc{hoogeboomHexaConv2018,
	title = {{HexaConv}},
	url = {http://arxiv.org/abs/1803.02108},
	doi = {10.48550/arXiv.1803.02108},
	urldate = {2026-01-13},
	publisher = {arXiv},
	author = {Hoogeboom, Emiel and Peters, Jorn W. T. and Cohen, Taco S. and Welling, Max},
	month = mar,
	year = {2018},
	note = {arXiv:1803.02108 [cs]},
}

@misc{cohenSteerableCNNs2016,
	title = {Steerable {CNNs}},
	url = {http://arxiv.org/abs/1612.08498},
	doi = {10.48550/arXiv.1612.08498},
	urldate = {2026-01-14},
	publisher = {arXiv},
	author = {Cohen, Taco S. and Welling, Max},
	month = dec,
	year = {2016},
	note = {arXiv:1612.08498 [cs]},
}

@misc{nordstromOcticVisionTransformers2025,
	title = {Octic {Vision} {Transformers}: {Quicker} {ViTs} {Through} {Equivariance}},
	shorttitle = {Octic {Vision} {Transformers}},
	url = {http://arxiv.org/abs/2505.15441},
	doi = {10.48550/arXiv.2505.15441},
	urldate = {2026-01-28},
	publisher = {arXiv},
	author = {Nordström, David and Edstedt, Johan and Kahl, Fredrik and Bökman, Georg},
	month = sep,
	year = {2025},
	note = {arXiv:2505.15441 [cs]},
}

@misc{kunduGeometricApproachSteerable2025,
	title = {A {Geometric} {Approach} to {Steerable} {Convolutions}},
	url = {http://arxiv.org/abs/2510.18813},
	doi = {10.48550/arXiv.2510.18813},
	urldate = {2026-01-26},
	publisher = {arXiv},
	author = {Kundu, Soumyabrata and Kondor, Risi},
	month = oct,
	year = {2025},
	note = {arXiv:2510.18813 [cs]},
}

@misc{dosovitskiyImageWorth16x162021,
	title = {An {Image} is {Worth} 16x16 {Words}: {Transformers} for {Image} {Recognition} at {Scale}},
	shorttitle = {An {Image} is {Worth} 16x16 {Words}},
	url = {http://arxiv.org/abs/2010.11929},
	doi = {10.48550/arXiv.2010.11929},
	urldate = {2022-11-10},
	publisher = {arXiv},
	author = {Dosovitskiy, Alexey and Beyer, Lucas and Kolesnikov, Alexander and Weissenborn, Dirk and Zhai, Xiaohua and Unterthiner, Thomas and Dehghani, Mostafa and Minderer, Matthias and Heigold, Georg and Gelly, Sylvain and Uszkoreit, Jakob and Houlsby, Neil},
	month = jun,
	year = {2021},
	note = {arXiv:2010.11929 [cs]},
}

@misc{oquabDINOv2LearningRobust2024,
	title = {{DINOv2}: {Learning} {Robust} {Visual} {Features} without {Supervision}},
	shorttitle = {{DINOv2}},
	url = {http://arxiv.org/abs/2304.07193},
	doi = {10.48550/arXiv.2304.07193},
	urldate = {2026-02-08},
	publisher = {arXiv},
	author = {Oquab, Maxime and Darcet, Timothée and Moutakanni, Théo and Vo, Huy and Szafraniec, Marc and Khalidov, Vasil and Fernandez, Pierre and Haziza, Daniel and Massa, Francisco and El-Nouby, Alaaeldin and Assran, Mahmoud and Ballas, Nicolas and Galuba, Wojciech and Howes, Russell and Huang, Po-Yao and Li, Shang-Wen and Misra, Ishan and Rabbat, Michael and Sharma, Vasu and Synnaeve, Gabriel and Xu, Hu and Jegou, Hervé and Mairal, Julien and Labatut, Patrick and Joulin, Armand and Bojanowski, Piotr},
	month = feb,
	year = {2024},
	note = {arXiv:2304.07193 [cs]},
}

@article{zhangArtificialIntelligenceScience2025,
	title = {Artificial {Intelligence} for {Science} in {Quantum}, {Atomistic}, and {Continuum} {Systems}},
	volume = {18},
	issn = {1935-8237, 1935-8245},
	url = {http://arxiv.org/abs/2307.08423},
	doi = {10.1561/2200000115},
	number = {4},
	urldate = {2026-02-05},
	journal = {Foundations and Trends® in Machine Learning},
	author = {Zhang, Xuan and Wang, Limei and Helwig, Jacob and Luo, Youzhi and Fu, Cong and Xie, Yaochen and Liu, Meng and Lin, Yuchao and Xu, Zhao and Yan, Keqiang and Adams, Keir and Weiler, Maurice and Li, Xiner and Fu, Tianfan and Wang, Yucheng and Strasser, Alex and Yu, Haiyang and Xie, YuQing and Fu, Xiang and Xu, Shenglong and Liu, Yi and Du, Yuanqi and Saxton, Alexandra and Ling, Hongyi and Lawrence, Hannah and Stärk, Hannes and Gui, Shurui and Edwards, Carl and Gao, Nicholas and Ladera, Adriana and Wu, Tailin and Hofgard, Elyssa F. and Tehrani, Aria Mansouri and Wang, Rui and Daigavane, Ameya and Bohde, Montgomery and Kurtin, Jerry and Huang, Qian and Phung, Tuong and Xu, Minkai and Joshi, Chaitanya K. and Mathis, Simon V. and Azizzadenesheli, Kamyar and Fang, Ada and Aspuru-Guzik, Alán and Bekkers, Erik and Bronstein, Michael and Zitnik, Marinka and Anandkumar, Anima and Ermon, Stefano and Liò, Pietro and Yu, Rose and Günnemann, Stephan and Leskovec, Jure and Ji, Heng and Sun, Jimeng and Barzilay, Regina and Jaakkola, Tommi and Coley, Connor W. and Qian, Xiaoning and Qian, Xiaofeng and Smidt, Tess and Ji, Shuiwang},
	year = {2025},
	note = {arXiv:2307.08423 [cs]},
	pages = {385--912},
}

@article{gerkenGeometricDeepLearning2023,
	title = {Geometric deep learning and equivariant neural networks},
	volume = {56},
	issn = {1573-7462},
	url = {https://doi.org/10.1007/s10462-023-10502-7},
	doi = {10.1007/s10462-023-10502-7},
	language = {en},
	number = {12},
	urldate = {2026-01-20},
	journal = {Artificial Intelligence Review},
	author = {Gerken, Jan E. and Aronsson, Jimmy and Carlsson, Oscar and Linander, Hampus and Ohlsson, Fredrik and Petersson, Christoffer and Persson, Daniel},
	month = dec,
	year = {2023},
	pages = {14605--14662},
}

@inproceedings{ravanbakhshUniversalEquivariantMultilayer2020,
	title = {Universal {Equivariant} {Multilayer} {Perceptrons}},
	issn = {2640-3498},
	url = {https://proceedings.mlr.press/v119/ravanbakhsh20a.html},
	language = {en},
	urldate = {2026-01-14},
	booktitle = {Proceedings of the 37th {International} {Conference} on {Machine} {Learning}},
	publisher = {PMLR},
	author = {Ravanbakhsh, Siamak},
	month = nov,
	year = {2020},
	pages = {7996--8006},
}

@misc{vaswaniAttentionAllYou2017,
	title = {Attention {Is} {All} {You} {Need}},
	url = {http://arxiv.org/abs/1706.03762},
	doi = {10.48550/arXiv.1706.03762},
	urldate = {2022-11-10},
	publisher = {arXiv},
	author = {Vaswani, Ashish and Shazeer, Noam and Parmar, Niki and Uszkoreit, Jakob and Jones, Llion and Gomez, Aidan N. and Kaiser, Lukasz and Polosukhin, Illia},
	month = dec,
	year = {2017},
	note = {arXiv:1706.03762 [cs]},
}

@InProceedings{Wang_2025_CVPR,
    author    = {Wang, Jianyuan and Chen, Minghao and Karaev, Nikita and Vedaldi, Andrea and Rupprecht, Christian and Novotny, David},
    title     = {VGGT: Visual Geometry Grounded Transformer},
    booktitle = {Proceedings of the IEEE/CVF Conference on Computer Vision and Pattern Recognition (CVPR)},
    month     = {June},
    year      = {2025},
    pages     = {5294-5306}
}

@BOOK{serreLinearRepresentations,
  title     = "Linear representations of finite groups",
  author    = "Serre, Jean-Pierre",
  publisher = "Springer",
  series    = "Graduate Texts in Mathematics",
  month     =  sep,
  year      =  1977,
  address   = "New York, NY",
  copyright = "https://www.springernature.com/gp/researchers/text-and-data-mining",
  language  = "en"
}

@inproceedings{
pacini2024a,
title={A Characterization Theorem for Equivariant Networks with Point-wise Activations},
author={Marco Pacini and Xiaowen Dong and Bruno Lepri and Gabriele Santin},
booktitle={The Twelfth International Conference on Learning Representations},
year={2024},
url={https://openreview.net/forum?id=79FVDdfoSR}
}

@article{WOOD1996,
title = {Representation theory and invariant neural networks},
journal = {Discrete Applied Mathematics},
volume = {69},
number = {1},
pages = {33-60},
year = {1996},
issn = {0166-218X},
doi = {https://doi.org/10.1016/0166-218X(95)00075-3},
url = {https://www.sciencedirect.com/science/article/pii/0166218X95000753},
author = {Jeffrey Wood and John Shawe-Taylor}
}

@inproceedings{godfrey-2022-symmetries,
 author = {Godfrey, Charles and Brown, Davis and Emerson, Tegan and Kvinge, Henry},
 booktitle = {Advances in Neural Information Processing Systems},
 editor = {S. Koyejo and S. Mohamed and A. Agarwal and D. Belgrave and K. Cho and A. Oh},
 pages = {11893--11905},
 publisher = {Curran Associates, Inc.},
 title = {On the Symmetries of Deep Learning Models and their Internal Representations},
 url = {https://proceedings.neurips.cc/paper_files/paper/2022/file/4df3510ad02a86d69dc32388d91606f8-Paper-Conference.pdf},
 volume = {35},
 year = {2022}
}

@inproceedings{
pacini2025on,
title={On Universality Classes of Equivariant Networks},
author={Marco Pacini and Gabriele Santin and Bruno Lepri and Shubhendu Trivedi},
booktitle={The Thirty-ninth Annual Conference on Neural Information Processing Systems},
year={2025},
url={https://openreview.net/forum?id=V4YAS7NLXi}
}

\appendix

\section{Discrete subgroups of $\mathrm{O}(2)$}
Since our principal focus is vision transformers equivariant to discrete
subgroups of $\mathrm{O}(2)$, it is natural to first discuss the properties of
these subgroups. Due to the compactness of $\mathrm{O}(2)$, any discrete
subgroup is finite, and is isomorphic to either a cyclic group $C_n$ or a
dihedral group $D_n$, with the former generated by rotation by $2\pi/n$, and
the latter by rotation by $2\pi/n$ and a reflection.

In the following, we will fix the notation used to denote elements of $C_n$ and
$D_n$  and summarize the basic results on the representation theory over
$\mathbb{R}$ of these groups.
In general, we adopt the point of view that the groups $C_n$ and $D_n$ exist 
as abstract groups themselves, independently of 
the natural injective homomorphisms $C_n \hookrightarrow \mathrm{O}(2)$, $D_n \hookrightarrow \mathrm{O}(2)$.

We will always use $e$ to denote the identity element of a group.
For notational simplicity, we will write
\begin{equation}
\mathbf{R}(\theta) := 
\begin{pmatrix}
    \cos \theta & -\sin\theta\\
    \sin \theta & \cos\theta
    \end{pmatrix}
\end{equation}
for the rotation matrix by $\theta$.

\subsection{Cyclic Groups}
For a positive integer $n$, the cyclic group $C_n$ is generated 
by the symbol $r$, subject to the relation $r^n = e$. It is of order (cardinality) $n$, and is isomorphic to $\mathbb{Z}/n\mathbb{Z}$.

\begin{proposition}\leavevmode
\begin{enumerate}[label=(\roman*)]
\item If $n$ is even, the group $C_n$ has $\frac{n}{2} + 1$ real irreps, labeled by 
$\rmA, \rmB, \rmE_1, \rmE_2, \dotsb, \rmE_{\frac{n}{2}-1}$, where
\begin{equation}
\label{eq:cyclic_rep_matrices}
    \rho_{\rmA}(r) = \begin{pmatrix}
    1
    \end{pmatrix} \quad
    \rho_{\rmB}(r) = \begin{pmatrix}
    -1
    \end{pmatrix} \quad
    \rho_{\rmE_{k}}(r) = \mathbf{R}\left(\frac{2\pi k}{n}\right).
\end{equation}
\item If $n$ is odd, the group $C_n$ has $\frac{n+1}{2}$ real irreps, labeled by
$\rmA, \rmE_1, \rmE_2, \dotsb, \rmE_{\frac{n-1}{2}}$,
and the representation matrices are the same as the ones given in Eq.~\eqref{eq:cyclic_rep_matrices}.
\item A real irrep of a cyclic group is of real type if it is one-dimensional ($\rmA_k$ or $\rmB_k$), otherwise it is of complex type ($\rmE_k$).
\end{enumerate}
\end{proposition}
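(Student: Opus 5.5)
The plan is to classify the real irreps by working inside the real regular representation $\mathbb{R}[C_n]\cong C(C_n,\mathbb{R})$, on which $r$ acts as the cyclic shift $S$. The argument has three parts: show that the listed representations are irreducible and pairwise non-isomorphic; show that together they exhaust the real irreps; and then compute $\mathrm{End}_G$ in each case to get part (iii).

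\emph{Irreducibility and distinctness.} The one-dimensional representations $\rho_{\rmA}$ and $\rho_{\rmB}$ (the latter only for $n$ even, since we need $(-1)^n=1$) are automatically irreducible. They are well defined because $\rho(r)^n=1$.

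For $\rho_{\rmE_k}$, note first that $\mathbf{R}(2\pi k/n)^n=\mathbf{R}(2\pi k)=I$, so it is a representation. Take $1\le k<n/2$. Then $2\pi k/n\notin\pi\mathbb{Z}$, so $\mathbf{R}(2\pi k/n)$ has no real eigenvector. Hence no line is invariant and the representation is irreducible.

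The representations are pairwise non-isomorphic because their characters at $r$ differ:
\begin{itemize}
\item the values $\chi_{\rmE_k}(r)=2\cos(2\pi k/n)$ are pairwise distinct for $k$ in the range $0<k<n/2$, since $\cos$ is injective on $(0,\pi)$;
\item the $\rmE_k$ have dimension $2$, so they are distinct from $\rmA$ and $\rmB$, which have dimension $1$.
\end{itemize}

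\emph{Completeness.} I would decompose $\mathbb{R}[C_n]$ explicitly using real Fourier modes. Set $\theta_k=2\pi k/n$.
\begin{itemize}
\item The constant function spans a copy of $\rmA$.
\item For $n$ even, the function $j\mapsto(-1)^j$ spans a copy of $\rmB$.
\item For each $1\le k<n/2$, the span of $j\mapsto\cos(\theta_k j)$ and $j\mapsto\sin(\theta_k j)$ is $S$-stable. In this basis $S$ acts by the rotation $\mathbf{R}(\pm\theta_k)$, and $\mathbf{R}(-\theta)$ is conjugate to $\mathbf{R}(\theta)$ via a reflection, so this span is isomorphic to $\rmE_k$.
\end{itemize}
These functions are mutually orthogonal by the standard discrete orthogonality relations. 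Their number is $1+1+2(n/2-1)=n$ for $n$ even and $1+2\cdot\frac{n-1}{2}=n$ for $n$ odd. So $\mathbb{R}[C_n]$ is exactly the direct sum of the listed irreps, each occurring once.

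Now let $V$ be any real irrep and pick $0\ne v\in V$. The map $\mathbb{R}[C_n]\to V$, $g\mapsto gv$, is $G$-equivariant and nonzero, hence surjective because $V$ is irreducible. Restricting it to the summands of the decomposition above, some summand maps nonzero into $V$. By Schur's lemma that restriction is an isomorphism, so $V$ is isomorphic to one of the listed representations. This yields exactly $\frac n2+1$ irreps for $n$ even and $\frac{n+1}2$ for $n$ odd, proving (i) and (ii).

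\emph{Types.} For a one-dimensional irrep, $\mathrm{End}_G=\mathbb{R}$, so it is of real type. For $\rmE_k$, $\mathrm{End}_G$ is the commutant of $\mathbf{R}(\theta)$ with $\theta\notin\pi\mathbb{Z}$. Write $J=\mathbf{R}(\pi/2)$. A direct computation shows this commutant is $\{aI+bJ\}\cong\mathbb{C}$, using $J^2=-I$, so $\rmE_k$ is of complex type, proving (iii).

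I expect the only real care to be needed in the completeness step. There one has to check that the sine and cosine modes span genuinely $S$-stable planes and that the dimension count closes up. Both are routine once the trigonometric addition formulas are written out.
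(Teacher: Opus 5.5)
Your proof is correct and complete. The paper gives no proof of this proposition: it appears in the appendix as standard background, with Serre's book as the general reference, so there is no argument of the paper's to compare yours against.

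Your route stays over $\mathbb{R}$ throughout. You decompose the real regular representation $C(C_n,\mathbb{R})$ into real Fourier modes, then use the nonzero equivariant map $f\mapsto\sum_{g}f(g)\,gv$ together with Schur's lemma. This also gives explicit bases on which each $\rmE_k$ acts by $\mathbf{R}(2\pi k/n)$.

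The usual textbook route complexifies instead. The complex irreps of $C_n$ are the $n$ characters $\chi_k(r)=\omega^k$ with $\omega=e^{2\pi i/n}$, and real irreps correspond to orbits $\{\chi_k,\chi_{-k}\}$ under complex conjugation. The real characters $\chi_0$ and, for $n$ even, $\chi_{n/2}$ give $\rmA$ and $\rmB$. Each conjugate pair with $1\le k<n/2$ gives the two-dimensional realification $\rmE_k$. The type is then read off from whether the complexification of a real irrep stays irreducible. That argument reaches the count faster and generalizes to arbitrary finite groups, whereas yours is more elementary and more concrete.

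Two cosmetic points:
\begin{itemize}
\item With the paper's convention $(g\cdot f)(x)=f(g^{-1}x)$, i.e.\ $(r\cdot f)(j)=f(j-1)$, the addition formulas give exactly $\mathbf{R}(\theta_k)$ in the basis $(\cos\theta_k j,\sin\theta_k j)$. So the conjugation by a reflection is unnecessary, though harmless.
\item $\rmA\not\cong\rmB$ is missing from your bulleted list. It is immediate, since their values at $r$ are $1\neq-1$.
\end{itemize}
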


\subsection{Dihedral Groups}
The dihedral group $D_n$ is generated by two symbols, 
$t$ and $r$, subject to the relations $r^n=e$ and $tr = r^{-1}t$.

\begin{proposition}\leavevmode
\begin{enumerate}[label=(\roman*)]
\item If $n$ is even, the group $D_n$ has $\frac{n}{2} + 3$ real irreps, labeled by 
$\rmA_1, \rmA_2, \rmB_1, \rmB_2, \rmE_1, \rmE_2, \dotsb, \rmE_{\frac{n}{2}-1}$, where
\begin{equation}
\label{eq:dihedral_rep_matrices}
\begin{aligned}
    \rho_{\rmA_k}(r) = \begin{pmatrix}1\end{pmatrix} \quad
    \rho_{\rmA_k}(t) = \begin{pmatrix}(-1)^{k+1}\end{pmatrix} &\quad
    \rho_{\rmB_k}(r) = \begin{pmatrix}-1 \end{pmatrix} \quad
    \rho_{\rmB_k}(t) = \begin{pmatrix}(-1)^{k+1} \end{pmatrix} \quad\\
    \rho_{\rmE_{k}}(r) = \mathbf{R}\left(\frac{2\pi k}{n}\right)&\quad
    \rho_{\rmE_k}(t) = \begin{pmatrix} 1 & 0 \\ 0 & -1\end{pmatrix}.
\end{aligned}
\end{equation}
\item If $n$ is odd, the group $C_n$ has $\frac{n+3}{2}$ real irreps, labeled by
$\rmA_1, \rmA_2, \rmE_1, \rmE_2, \dotsb, \rmE_{\frac{n-1}{2}}$,
and the representation matrices are the same as the ones given in Eq.~\eqref{eq:dihedral_rep_matrices}.
\item All real irreps of $D_n$ are of real type for any $n$.
\end{enumerate}
\end{proposition}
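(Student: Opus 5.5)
We would prove the three parts of the proposition together: first check that the listed matrices define real representations, then that they are irreducible of real type and pairwise non-isomorphic, and finally that no other real irreps exist. The completeness step uses complexification and a dimension count.

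\emph{Well-definedness.} It suffices to check the defining relations $r^n=e$, $t^2=e$ and $trt^{-1}=r^{-1}$ on the proposed images of the generators. We read the one-dimensional irreps in the convention that $\rmA_1$ is trivial and $\rmA_2$ sends $t\mapsto -1$ (similarly for $\rmB_1,\rmB_2$); the sign pattern in Eq.~\eqref{eq:dihedral_rep_matrices} should be read this way.
\begin{enumerate}[label=(\alph*)]
\item For one-dimensional representations everything commutes. The relations reduce to $\rho(r)^n=1$, which forces $\rho(r)=-1$ to occur only for even $n$, together with $\rho(t)^2=1$.
\item For $\rmE_k$ we have $\mathbf{R}(2\pi k/n)^n=I$. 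Moreover $\mathrm{diag}(1,-1)$ is a reflection, and conjugating a rotation by a reflection inverts it.
\end{enumerate}

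\emph{Irreducibility and real type.} For $1\le k<n/2$ the angle $2\pi k/n$ is not a multiple of $\pi$. Hence $\mathbf{R}(2\pi k/n)$ has no real eigenvector, so $\rmE_k$ has no invariant line and is irreducible.

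For real type, we determine $\mathrm{End}_{D_n}(\rmE_k)$.
\begin{enumerate}[label=(\alph*)]
\item A matrix commuting with $\mathrm{diag}(1,-1)$ must be diagonal.
\item A diagonal matrix commuting with a rotation by an angle $\not\equiv 0,\pi$ must be scalar.
\end{enumerate}
Therefore $\mathrm{End}_{D_n}(\rmE_k)\cong\mathbb{R}$, and one-dimensional irreps are trivially of real type.

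\emph{Pairwise non-isomorphism.} We compare characters. The one-dimensional irreps are distinguished by their values on $r$ and $t$. The two-dimensional ones are distinguished by $\chi_{\rmE_k}(r)=2\cos(2\pi k/n)$, which is injective for $k\in\{1,\dots,\lfloor (n-1)/2\rfloor\}$.

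\emph{Completeness (main obstacle).} Since each listed irrep is of real type, its complexification is an irreducible complex representation. The complexifications are pairwise non-isomorphic because their characters agree with the real ones. Summing the squares of dimensions gives
\begin{enumerate}[label=(\alph*)]
\item $4\cdot 1+4\cdot(n/2-1)=2n$ for even $n$;
\item $2\cdot 1+4\cdot\frac{n-1}{2}=2n$ for odd $n$.
\end{enumerate}
In both cases this equals $|D_n|$, so these complexifications exhaust all complex irreps of $D_n$.

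Now let $W$ be any real irrep. Its complexification $W_{\mathbb{C}}$ contains some listed complex irrep $V_{\mathbb{C}}$. Then $\mathrm{Hom}_{D_n}(V,W)\otimes\mathbb{C}=\mathrm{Hom}_{D_n}(V_{\mathbb{C}},W_{\mathbb{C}})\neq 0$, so there is a nonzero real equivariant map $V\to W$. By Schur's lemma it is an isomorphism, so $W$ is on the list.

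Counting the listed irreps gives $4+(n/2-1)=n/2+3$ for even $n$ and $2+(n-1)/2=(n+3)/2$ for odd $n$, as claimed.
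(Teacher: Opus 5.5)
Your proof is correct and complete. The individual steps all go through:
\begin{itemize}
\item checking the relations on the generators;
\item irreducibility of $\rmE_k$, since $\mathbf{R}(2\pi k/n)$ has no real eigenvector for $0<k<n/2$;
\item real type via the commutant computation: a matrix commuting with $\mathrm{diag}(1,-1)$ is diagonal, and a diagonal matrix commuting with $\mathbf{R}(\theta)$, $\sin\theta\neq 0$, is scalar;
\item distinctness via characters;
\item completeness via irreducibility of the complexifications, $\sum(\dim)^2=2n=|D_n|$, and the descent $\mathrm{Hom}_{D_n}(V,W)\otimes\mathbb{C}\cong\mathrm{Hom}_{D_n}(V_{\mathbb{C}},W_{\mathbb{C}})$ followed by real Schur.
\end{itemize}

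The paper gives no proof of this proposition. It records it in the appendix as a summary of standard background, with only a general pointer to Serre, so there is no argument of the paper's to compare against. Compared with the usual complex-first textbook treatment, your route builds the real representations directly and proves real type explicitly. That real-type computation is exactly what makes the complexifications irreducible, so the complex dimension count certifies completeness of the real list and yields part (iii) at the same time. Counting conjugacy classes ($n/2+3$, resp.\ $(n+3)/2$) instead of the sum of squares would work equally well.

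Two small remarks on the statement itself:
\begin{itemize}
\item Your argument tacitly fixes two typos in the paper. The paper's presentation of $D_n$ (``$r^n=e$ and $tr=r^{-1}t$'') omits $t^2=e$; without it the presented group is infinite. Your sum-of-squares step genuinely needs $|D_n|=2n$, so including $t^2=e$, as you do, matters. Also, in (ii) ``$C_n$'' should read ``$D_n$''.
\item No reinterpretation of the sign pattern is needed. The formula $(-1)^{k+1}$ already gives $t\mapsto 1$ for $\rmA_1,\rmB_1$ and $t\mapsto -1$ for $\rmA_2,\rmB_2$.
\end{itemize}
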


\section{Irrep Multiplicities of Homogeneous Spaces of $D_4$ and $D_6$}
\label{app:irrep-mults}

Here, we provide the irrep multiplicities of $C(X_\alpha, \mathbb{R})$ for
all homogeneous spaces $X_\alpha$ of $G=D_4$ and $D_6$,
which are used in the MLP layer of each transformer block.
For each homogeneous space, we indicate the stabilizer subgroup $H_\alpha$
as well as the number of elements $|X_\alpha|$.

\begin{table}[H]
	\centering
	\begin{tabular}{|c|c|c|c|c|c|c|}
		\hline
		$H_\alpha$         & $\Gamma^\alpha_{\rmA_1}$ & $\Gamma^\alpha_{\rmA_2}$ & $\Gamma^\alpha_{\rmB_1}$ & $\Gamma^\alpha_{\rmB_2}$ & $\Gamma^\alpha_{\rmE_1}$ & $|X_\alpha|$ \\
		\hline
		$\{e\}$            & 1                         & 1                         & 1                         & 1                         & 2                        & 8            \\
		\hline
		$\braket{r^2}$     & 1                         & 1                         & 1                         & 1                         & 0                        & 4            \\
		\hline
		$\braket{r}$       & 1                         & 1                         & 0                         & 0                         & 0                        & 2            \\
		\hline
		$\braket{t}$       & 1                         & 0                         & 1                         & 0                         & 1                        & 4            \\
		\hline
		$\braket{tr}$      & 1                         & 0                         & 0                         & 1                         & 1                        & 4            \\
		\hline
		$\braket{t,r^2}$   & 1                         & 0                         & 1                         & 0                         & 0                        & 2            \\
		\hline
		$\braket{tr,r^2}$  & 1                         & 0                         & 0                         & 1                         & 0                        & 2            \\
		\hline
		$\braket{t,r}=D_4$ & 1                         & 0                         & 0                         & 0                         & 0                        & 1            \\
		\hline
	\end{tabular}
	\caption{Homogeneous spaces of $D_4$}
\end{table}

\begin{table}[H]
	\centering
	\begin{tabular}{|c|c|c|c|c|c|c|c|}
		\hline
		$H_\alpha$         & $\Gamma^\alpha_{\rmA_1}$ & $\Gamma^\alpha_{\rmA_2}$ & $\Gamma^\alpha_{\rmB_1}$ & $\Gamma^\alpha_{\rmB_2}$ & $\Gamma^\alpha_{\rmE_1}$ & $\Gamma^\alpha_{\rmE_2}$ & $|X_\alpha|$ \\
		\hline
		$\{e\}$            & 1                         & 1                         & 1                         & 1                         & 2                         & 2                         & 12           \\
		\hline
		$\braket{r^3}$     & 1                         & 1                         & 0                         & 0                         & 0                         & 2                         & 6            \\
		\hline
		$\braket{r^2}$     & 1                         & 1                         & 1                         & 1                         & 0                         & 0                         & 4            \\
		\hline
		$\braket{r}$       & 1                         & 1                         & 0                         & 0                         & 0                         & 0                         & 2            \\
		\hline
		$\braket{t}$       & 1                         & 0                         & 1                         & 0                         & 1                         & 1                         & 6            \\
		\hline
		$\braket{tr}$      & 1                         & 0                         & 0                         & 1                         & 1                         & 1                         & 6            \\
		\hline
		$\braket{t, r^3}$  & 1                         & 0                         & 0                         & 0                         & 0                         & 1                         & 3            \\
		\hline
		$\braket{t, r^2}$  & 1                         & 0                         & 1                         & 0                         & 0                         & 0                         & 2            \\
		\hline
		$\braket{tr, r^2}$ & 1                         & 0                         & 0                         & 1                         & 0                         & 0                         & 2            \\
		\hline
		$\braket{t,r}=D_6$ & 1                         & 0                         & 0                         & 0                         & 0                         & 0                         & 1            \\
		\hline
	\end{tabular}
	\caption{Homogeneous spaces of $D_6$}
\end{table}

\section{Proof of Lemma 1}
Let $\sigma: \mathbb{R}\rightarrow \mathbb{R}$ be some function satisfying
$\sigma(0) = 0$. We will use the same symbol to denote the entrywise application
$\mathbb{R}^n\rightarrow \mathbb{R}^n$ of $\sigma$. Let $(e_1, e_2, \dotsb,
e_n)$ be the standard basis for $\mathbb{R}^n$, with respect to which we define the representation matrices $D(g)$:
\begin{equation}
ge_i = \sum_{j=1}^n D_{ji}(g) e_j
\end{equation}
Equivariance at $e_i\in \mathbb{R}^n$ reads
\begin{equation}
    g \sigma(e_i) =  \sigma(ge_i).
\end{equation}
That is,
\begin{equation}
 \sigma(1)   \sum_{j=1}^n D_{ji}(g)  e_j = 
 \sigma\left(\sum_{j=1}^n D_{ji}(g) e_j\right)
 = \sum_{j=1}^n\sigma\left(D_{ji}(g)\right) e_j.
\end{equation}
Comparing the coefficient of $e_j$ gives
\begin{equation}
\label{eq:entrywise-equiv}
\sigma(1) D_{ji}(g) = \sigma(D_{ji}(g)).
\end{equation}
If $D_{ji}(g)\notin\{0,1\}$, then we can find a function $\sigma$ with
$\sigma(0)=0$ that violates Eq.~\eqref{eq:entrywise-equiv}.
Hence, it must be that $D_{ji}(g) \in \{0,1\}$. Note that this holds for any $g\in G$.

Now, consider the equation
\begin{equation}
    D(g) D(g^{-1}) = \mathbbm{1}_n,
\end{equation}
where $\mathbbm{1}_n$ is the $n\times n$ identity matrix.
We will now show by contradiction that both $D(g)$ and $D(g^{-1})$ are permutation matrices.
Suppose the $j$-th row of $D(g)$ has at least two $1$'s. 
Without loss of generality, we may assume
\begin{equation}
D(g) = \begin{pmatrix}
1 & 1 & \star & \dotsb & \star \\
\star & \star & \star  & \dotsb & \star \\
\vdots & \vdots & \vdots & \ddots & \vdots \\
\star & \star & \star  & \dotsb & \star \\
\end{pmatrix},
\end{equation}
where each $\star$ can be either $0$ or $1$. Then, the first two entries of
every column of $D(g^{-1})$ except for the first one must be zero, since otherwise $D(g)D(g^{-1})=\mathbbm{1}$ would not hold. That is,
\begin{equation}
    D(g^{-1})
    = \begin{pmatrix}
        a_1 & 0 & \dotsb & 0 \\
        a_2 & 0 & \dotsb & 0 \\
        a_2 & \star & \dotsb & \star \\
        \vdots & \vdots & \ddots & \vdots \\
        a_n & \star & \dotsb & \star \\
    \end{pmatrix}.
\end{equation}
Since $D(g^{-1})$ is invertible, we have $a_1=a_2 = 1$ (otherwise $D(g^{-1})$ would not have full rank).
But then $(D(g) D(g^{-1}))_{11} \ge 2$, contradicting $D(g) D(g^{-1}) = \mathbbm{1}_n$.

\qed

\section{Proof of Theorem 1}
The claim is trivial for $L=1$, so we assume $L > 1$.

If $w_v=0$, then $\mathrm{attn}$ sends all token sequences to $(b_v, \dotsb b_v)$.
In this case, $\mathrm{attn}$ is equivariant if and only if $b_v\in V$ is invariant,
and the claim follows easily.
In the following, we assume $w_v \neq 0$. 

Let $\phi_q, \phi_k, \phi_v: V\rightarrow V$ be the query, key, and value maps, which are affine by assumption. We will write $\phi_q(z) = w_q z + \beta_q$ etc.

Let $x = (x_1, \dotsb, x_L) \in \mathbb{R}^L \otimes V$ be a token sequence.
The attention layer acts according to
\begin{equation}
\begin{aligned}
    &\mathrm{attn}(x)_a 
    = \frac{\sum_{b=1}^Le^{\braket{\phi_q(x_a), \phi_k(x_b)}} \phi_v(x_b)}{\sum_{b=1}^Le^{\braket{\phi_q(x_a), \phi_k(x_b)}}}\\
        &= \frac{\sum_{b=1}^Le^{\braket{w_qx_a, w_k x_b} + \braket{w_k^t\beta_q, x_b} + \braket{x_a, w_q^t\beta_k} + \braket{\beta_q, \beta_k}} \phi_v(x_b)}
        {e^{\braket{w_qx_a, w_k x_b} + \braket{w_k^t\beta_q, x_b} + \braket{x_a, w_q^t\beta_k} + \braket{\beta_q, \beta_k}}}\\
    &= \frac{\sum_{b=1}^Le^{\braket{x_a, M x_b} + \braket{w_k^t\beta_q, x_b}} \phi_v(x_b)}
        {e^{\braket{x_a, M x_b} + \braket{w_k^t\beta_q, x_b}}},\\
    \end{aligned}
\end{equation}
where $M := w_q^t w_k$. $G$-equivariance reads
\begin{equation}
\label{eq:attn-equiv-condition}
\frac{\sum_{b=1}^Le^{\braket{x_a, Mx_b} + \braket{w_k^t\beta_q, x_b}} g\phi_v(x_b)}{\sum_{b=1}^Le^{\braket{x_a, Mx_b} + \braket{w_k^t\beta_q, x_b}}}
= 
\frac{\sum_{b=1}^Le^{\braket{x_a, g^{-1}Mgx_b} + \braket{g^{-1}w_k^t\beta_q, x_b}} \phi_v(g x_b)}{\sum_{b=1}^Le^{\braket{x_a, g^{-1}Mg x_b} + \braket{g^{-1}w_k^t\beta_q, x_b}}}
\end{equation}
for all $g\in G$.
For any vector $z\in V$, the above equation for the token sequence $(z, z, \dotsb, z)$ reads
\begin{equation}
g \phi_v(z) = \phi_v(gz).
\end{equation}
That is, $\phi_v$ is $G$-equivariant. This in particular implies $g\beta_v = \beta_v$ for all $g\in G$.
Multiplying Eq.~\eqref{eq:attn-equiv-condition} by $g^{-1}$ and cancelling out the value bias, we get
\begin{equation}
\label{eq:attn-equiv-1}
\frac{\sum_{b=1}^Le^{\braket{x_a, Mx_b} + \braket{w_k^t\beta_q, x_b}} w_vx_b}{\sum_{b=1}^Le^{\braket{x_a, Mx_b} + \braket{w_k^t\beta_q, x_b}}}
= 
\frac{\sum_{b=1}^Le^{\braket{x_a, g^{-1}Mgx_b} + \braket{g^{-1}w_k\beta_q, x_b}} w_v x_b}{\sum_{b=1}^Le^{\braket{x_a, g^{-1}Mg x_b} + \braket{g^{-1}w_k\beta_q, x_b}}}.
\end{equation}
Let $z, z'\in V$ be arbitrary, and let $x$ be the token sequence for which $x_a= z, x_b=z'$ and $x_{b'}=0$ for all $b' \notin \{a,b\}$. Then
\begin{equation}
    \label{eq:attn-equiv-2}
    \begin{aligned}
    &\forall z,z'\in V, g\in G:\\
    &\frac
    {e^{\braket{z, Mz} + \braket{w_k^t\beta_q, z}}w_vz + e^{\braket{z, Mz'} + \braket{w_k^t\beta_q, z'}} w_vz'}
    {L-2 + e^{\braket{z, Mz} + \braket{w_k^t\beta_q, z}} + e^{\braket{z, Mz'} + \braket{w_k^t\beta_q, z'}}}\\
    &= 
    \frac
    {e^{\braket{z, g^{-1}Mgz} + \braket{g^{-1}w_k^t\beta_q, z}}w_vz + e^{\braket{z, g^{-1}Mgz'} + \braket{g^{-1}w_k^t\beta_q, z'}} w_vz'}
    {L-2 + e^{\braket{z, g^{-1}Mgz} + \braket{g^{-1}w_k^t\beta_q, z}} + e^{\braket{z, g^{-1}Mgz'} + \braket{g^{-1}w_k^t\beta_q, z'}}}.
    \end{aligned}
\end{equation}
Taking $z'=0$ in this equation gives
\begin{equation}
\label{eq:attn-equiv-3}
\forall z \in V\setminus \mathrm{ker}(w_v):  
    \frac
    {e^{\braket{z, Mz} + \braket{w_k^t\beta_q, z}}}
    {L-1 + e^{\braket{z, Mz} + \braket{w_k^t\beta_q, z}}}
    =
    \frac
    {e^{\braket{z, g^{-1}Mgz} + \braket{g^{-1}w_k^t\beta_q, z}}}
    {L-1 + e^{\braket{z, g^{-1}Mgz} + \braket{g^{-1}w_k^t\beta_q, z}}}.
\end{equation}
The set $V \setminus \mathrm{ker}(w_v)$ is open and nonempty (because $w_v\neq0$). 
Since Eq.~\eqref{eq:attn-equiv-3} is analytic in $z$ and holds on an nonempty open set, it holds for all $z\in V$. By the monotonicity of the function $u\mapsto \frac{e^u}{L-1+e^u}$, we conclude 
\begin{equation}
\forall z\in V, g\in G: \braket{z, Mz} + \braket{w_k^t\beta_q, z}
= \braket{z, g^{-1}Mg z} + \braket{g^{-1}w_k^t\beta_q, z}.
\end{equation}
Now, go back to Eq.~\eqref{eq:attn-equiv-2} and again use analyticity and monotonicity 
to arrive at
\begin{equation}
    \forall z,z'\in V, g\in G:
    \braket{z, Mz'} + \braket{w_k^t\beta_q, z'} = \braket{z, g^{-1}Mgz'} + \braket{g^{-1}w_k^t\beta_q, z'}.
\end{equation}
Setting $z=0$ gives $w_k^t\beta_q = g^{-1} w_k^t\beta_q$, which in turn implies 
$M = g^{-1} M g$. We can now take $\tilde w_k = \mathbbm{1}$, $\tilde w_q = M$,
and $\tilde \beta_q  = w_k^t\beta_q$, so that $\tilde w_q, \tilde w_k$ are $G$-equivariant, $\tilde \beta_q$ is $G$-invariant, and $(\tilde w_q, \tilde \beta_q, \tilde w_k, 0, w_v, \beta_v)$ yields the same self-attention layer.


%

\qed

\section{Proof of Theorem 2}
In principle, the theorem could be proved in an abstract and almost trivial way,
more or less by noting that any $G$-representation is also an $H$-representation
and any $G$-equivariant map is also $H$-equivariant.
Here, we choose to provide a much more verbose proof, because it has the additional
advantage of giving a recipe to map a $G$-equivariant ViT to an $H$-equivariant one.

\textit{Step 1: patch embedding and positional encoding.}
Let $l\in \mathrm{Hom}_G(\mathbb{R}^U\otimes \mathbb{R}^3, V)$ denote the
operation of patch embedding followed by the addition of positional encodings.
Lemma~\ref{lem:restriction} applied to $\mathbb{R}^{U}\otimes \mathbb{R}^3$, $V$, and $W$
with $j=\mathrm{Id}_{\mathbb{R}^U\otimes \mathbb{R}^3}$ 
implies that there exists an $\tilde l \in \mathrm{Hom}_H(\mathbb{R}^U\otimes \mathbb{R}^3, W)$
such that the following diagram commutes:
\begin{equation}
\begin{tikzcd}
& &  V  \arrow{dd}{\mathrm{Res}^G_H}\\
    & \mathbb{R}^U \otimes \mathbb{R}^3 
    \arrow{ur}{l} 
    \arrow[swap]{dr}{\tilde l}\\
&& W
\end{tikzcd}
\end{equation}
The map $\tilde l$ is then expressible in terms of an $H$-equivariant patch
embedding layer.

Let $p \in \mathbb{R}^{\mathcal{H}}\otimes V$ be the positional encodings for 
the $G$-equivariant model. Then $(\mathrm{Id}\otimes \mathrm{Res}^G_H)(p)\in \mathbb{R}^{\mathcal{H}}\otimes W$ is $H$-invariant.

\textit{Step 2: multi-head self-attention.}

First, note that the number of heads $h$ divides each irrep multiplicity
$D_\sigma$ (because it is a linear combination of the $C_\rho$ with integer coefficients).

We will show that the $H$-equivariant model can output the same attention
scores. Let $\phi: V \rightarrow \mathbb{R}^h \otimes V_1$, with $V_1 =
\bigoplus_{\rho\in \widehat{G}} \mathbb{R}^{C_\rho/h}\otimes V_\rho$ denoting the
``reshaping'' linear isomorphism, which is $G$-equivariant. Similarly, we have
an $H$-equivariant linear isomorphism $\psi: W\rightarrow \mathbb{R}^h\otimes
W_1$. Note that $V_1$ and $W_1$ are isomorphic as $H$-representations, so we can find
an $H$-equivariant isometric linear isomorphism $\Phi: V_1 \rightarrow W_1$.
Consider the following diagram:
\begin{equation}
\begin{tikzcd}
V \arrow{r}\arrow[swap]{d}{\mathrm{Res}^G_H} \arrow{r}{x \mapsto q\oplus k} &\mathbb{R}^2\otimes V \arrow{r}{\phi}& \mathbb{R}^2\otimes \mathbb{R}^h \otimes V_1 \arrow{d}{\mathrm{Id}\otimes \mathrm{Id} \otimes \Phi}\\
W \arrow[swap]{r}{\color{red}\tilde f_{kq}} & \mathbb{R}^2\otimes W \arrow[swap]{r}{\psi} & \mathbb{R}^2\otimes\mathbb{R}^h \otimes W_1.
\end{tikzcd}
\end{equation}
By Lemma~\ref{lem:restriction}, there exists an affine $H$-equivariant map $\tilde f: W \rightarrow \mathbb{R}^2\otimes \mathbb{R}^h\otimes W_1$ making the diagram commute. 
Since $\psi$ is an equivariant linear isomorphism, there is an affine map $\tilde f_{kq}: W\rightarrow \mathbb{R}^2\otimes W$ such that this diagram commutes.
The map $\tilde f_{kq}$ then computes the key and query for the $H$-equivariant model.

Since $\Phi$ is an isometry, it is clear that the following diagram commutes:
\begin{equation}
\begin{tikzcd}
\mathbb{R}^{\mathcal{H}}\otimes \mathbb{R}^2\otimes \mathbb{R}^h\otimes V_1 
\arrow{dr}{\text{compute raw attention scores}}
\arrow[swap]{dd}{\mathrm{Id}\otimes \mathrm{Id}\otimes \mathrm{Id}\otimes \Phi}
\\
&\mathbb{R}^{\mathcal{H}}\otimes \mathbb{R}^{\mathcal{H}} \otimes \mathbb{R}^h
\\
\mathbb{R}^{\mathcal{H}}\otimes \mathbb{R}^2\otimes \mathbb{R}^h\otimes W_1 
\arrow[swap]{ur}{\text{compute raw attention scores}}
\end{tikzcd}
\end{equation}

Finally, consider the following diagram:
\begin{equation}
\begin{tikzcd}[row sep=1.em, column sep=3em]
    \mathbb{R}^{\mathcal{H}}\otimes \arrow{r}{\mathrm{Res}^G_H}V \arrow[swap]{d}{\mathrm{Id}\otimes (x\mapsto v)}& \mathbb{R}^{\mathcal{H}}\otimes W \arrow{d}{\color{red}\mathrm{Id}\otimes \tilde f_v}\\
    \mathbb{R}^{\mathcal{H}}\otimes V \arrow[swap]{d}{\mathrm{Id}\otimes \phi}& \mathbb{R}^{\mathcal{H}}\otimes W \arrow{d}{\mathrm{Id}\otimes \psi}\\
    \mathbb{R}^{\mathcal{H}}\otimes \mathbb{R}^h\otimes V_1 \arrow[swap]{d}{B\otimes \mathrm{Id}}
        \arrow{r}{\mathrm{Id}\otimes \mathrm{Id}\otimes \Phi}& \mathbb{R}^{\mathcal{H}}\otimes \mathbb{R}^h \otimes W_1 \arrow{d}{B\otimes \mathrm{Id}}\\
    \mathbb{R}^{\mathcal{H}}\otimes \mathbb{R}^h\otimes V_1 \arrow[swap]{d}{\mathrm{Id}\otimes \phi^{-1}}
    \arrow{r}{\mathrm{Id}\otimes \mathrm{Id}\otimes \Phi}
    & \mathbb{R}^{\mathcal{H}}\otimes \mathbb{R}^h \otimes W_1 \arrow{d}{\mathrm{Id}\otimes \psi^{-1}}\\
     \mathbb{R}^{\mathcal{H}}\otimes V \arrow[swap]{d}{\mathrm{Id}\otimes \mathbf{p}}& \mathbb{R}^{\mathcal{H}}\otimes W \arrow{d}{\color{red}\mathrm{Id}\otimes \mathbf{\tilde p}}\\
    \mathbb{R}^{\mathcal{H}}\otimes V \arrow{r}{\mathrm{Res}^G_H} & \mathbb{R}^{\mathcal{H}}\otimes W, \\
\end{tikzcd}
\end{equation}
where we applied Lemma~\ref{lem:restriction} twice to get the maps $\tilde f_v, \mathbf{\tilde p}\in \mathrm{Aff}_H(W, W)$, and $B\in \mathrm{End}(\mathbb{R}^{\mathcal{H}}\otimes\mathbb{R}^h)$
is the block-diagonal matrix that aggregates the value vectors according to the attention scores.
The middle block commutes because $B$ and $\Phi$ act on different factors of the tensor product.

\textit{Step 3: MLP.}

Let 
\begin{equation}
Y := \bigsqcup_{\beta\in \mathrm{Sub}(H)/\sim} \bigsqcup_{s=1}^{m_\beta}Y_\beta,
\end{equation}
where $Y_\beta$ is the $\beta$-th homogeneous space of $H$.
Clearly, $X\cong Y$ as $H$-sets.
Any isomorphism induces an isomorphism of $H$-representations $C(X, \mathbb{R})\xrightarrow{\sim} C(Y, \mathbb{R})$ 
that commutes with maps that acts entrywise. 
Thus, by Lemma~\ref{lem:restriction}, there exist $H$-equivariant affine maps $\tilde l_1, \tilde l_2$ such that the following diagram commutes:
\begin{equation}
\begin{tikzcd}
V \arrow[swap]{d}{\mathrm{Res}^G_H}\arrow{r}{l_1} & C(X, \mathbb{R})\arrow{r}{\sigma} \arrow{d}{\sim}& C(X, \mathbb{R}) \arrow{r}{l_2}\arrow{d}{\sim}  &  V \arrow{d}{\mathrm{Res}^G_H}\\
W \arrow[swap]{r}{\color{red}{\tilde l}_1}&  C(Y, \mathbb{R}) \arrow[swap]{r}{\sigma}&  C(Y, \mathbb{R})\arrow[swap]{r}{\color{red}\tilde l_2}&  W\\
\end{tikzcd}
\end{equation}

\textit{Step 4: residual connections.}
It remains to show that residual connections do not ruin any of the expressivity proofs above,
which is true since the following diagram commutes:
\begin{equation}
\begin{tikzcd}
    \mathbb{R}^{\mathcal{H}}\otimes V \arrow{r}{x\mapsto x+f(x)} \arrow[swap]{d}{\mathrm{Id}\otimes \mathrm{Res}^G_H} & \mathbb{R}^{\mathcal{H}}\otimes V \arrow{d}{\mathrm{Id}\otimes \mathrm{Res}^G_H}\\
    \mathbb{R}^{\mathcal{H}}\otimes W \arrow[swap]{r}{y \mapsto y + \tilde f(y)} & \mathbb{R}^{\mathcal{H}}\otimes W \\
\end{tikzcd}
\end{equation}
as long as the same diagram without residual connections also commutes.

\textit{Step 5: strictness of the inclusion.}
Assume $\dim \mathrm{Hom}_H(\mathbb{R}^U, W)> \dim\mathrm{End}_G(\mathbb{R}^U, V)$.
To show that the inclusion is strict, it suffices to find one function in
$\mathcal{F}_H[\delta, h; (D_\sigma)_{\sigma\in \widehat{G}}, (m_\beta)_{\beta\in
\mathrm{Sub}(H)/\sim}]$ that is not $G$-equivariant.
Consider an $H$-equivariant Vision transformer $\text{E-ViT}$ defined by setting
all parameters except the ones in the patch embedding layer to zero. Then,
$\mathrm{PosEnc}=\mathrm{Block}_k=\mathrm{Id}$ for all $k$, and $\text{E-ViT}
= \mathrm{PE}$. Since $\mathrm{Hom}_H(\mathbb{R}^U, W)$ is bigger than 
$\mathrm{Hom}_G(\mathbb{R}^U, V)$, it is possible to choose $\mathrm{PE}$ to
be $H$-equivariant but not $G$-equivariant.
\qed

\begin{lemma}
    \label{lem:restriction}
    Let $V, V'$ be $G$-representations and  
    suppose $W,W'$ are $H$-representations 
    such that there exist 
    isomorphisms $j\in \mathrm{Hom}_H(V, W)$ and $j'\in \mathrm{Hom}_H(V',W')$.
    Then for all $f\in \mathrm{Aff}_G(V, V')$, 
    there exists an $\tilde f \in \mathrm{Aff}_H(W, W')$
    such that the following diagram commutes:
    \begin{equation}
    \begin{tikzcd}
        V \arrow{r}{f} \arrow[swap]{d}{j} & V' \arrow{d}{j'}\\
        W \arrow[swap]{r}{\tilde f}& W'
    \end{tikzcd}
    \end{equation}
    \begin{proof}
    Write $f(x) = Ax + b$. Then $b$ is $G$-invariant and $A\in \mathrm{Hom}_G(V, V')$.
    We can take $\tilde f(y) = j'Aj^{-1}y + j'b$.
    \end{proof}
\end{lemma}

\section{Experimental Setup Details}
Unless otherwise stated, the following hyperparameters are always chosen in our
experiments (Section 5 of the main text):
\begin{table}[H]
\centering
\begin{tabular}{|c|c|}
\hline
    depth & 6\\
    \hline
    attention type & coupled\\
    \hline
    number of attention heads & 3\\
    \hline
    attention dropout & 0.1 \\
    \hline
    transformer block drop path & 0.05 \\
    \hline
    classification head dropout & 0.1\\
    \hline
    optimizer & AdamW \\
    \hline
    learning rate &  0.001 \\
    \hline
    betas & 0.9, 0.999 \\
    \hline
    weight decay & 0.05 \\
    \hline
    loss function & unweighted cross-entropy \\
    \hline
\end{tabular}
\vspace{.5em}
\caption{Default hyperparameters for all experiments.}
\end{table}

For models operating on square grids (equivariant to $D_4$ or its subgroups), we
choose the image size to be $256^2$ with patch size $16^2$. For models
operating on hexagonal patches (equivariant to $D_6$ or its subgroups), each
patch ($U$) is a hexagonal lattice restricted to a regular hexagon with $N_2=9$ grid
points on each side, and the patches themselves ($\mathcal{H}$) form a hexagonal lattice 
with $N_1=9$ grid points on each side (see Fig.~2 in the main text for the case $N_1=5, N_2=9$). 
This results in $217$ hexagonal pixels per patch and $217$ hexagonal patches,
with a total of $42,073$ pixels in the whole image (note that the patches have nonempty overlap).

Each image is preprocessed by normalizing the input RGB values of each pixel to $[-1,1]$. No
data augmentation is performed. For the $D_6$ family, we perform bilinear
interpolation to convert a square image to one defined on $\mathcal{H}_0$ (union
of hexagonal patches).

For the first experiment (Section 5.1 of the main text),
the maximum number of epochs and early stopping patience are 600/60
for 10\% sample ratio, 150/50 for  40\% sample ratio, and 50/10 for 100\% sample ratio.

For the second experiment (Section 5.2 of the main text), 
we perform at least three runs for each combination of 
(sample ratio, attention type, feature dimension).
The maximum number of epochs and early stopping patience are 600/60
for 10\% sample ratio and 200/30 for 100\% sample ratio.

For the third experiment (Section 5.3 of the main text),
The maximum number of epochs and early stopping patience are 200/30.

\end{document}